\theoremstyle{definition}
\newtheorem{lemma}{Lemma}[section]
\newtheorem{theorem}{Theorem}[section]
\newtheorem{assumption}{Assumption}[section]
\newtheorem{remark}{Remark}[section]
\newtheorem{corollary}{Corollary}[section]
\newtheorem{proposition}{Proposition}[section]
\newcommand{\R}{\mathbb{R}}
\newcommand{\E}{\mathbb{E}}
\newcommand{\PP}{\mathbb{P}}
\journal{Neural Networks}
\begin{document}

\begin{frontmatter}
	
	
	\title{UAdam: Unified Adam-Type Algorithmic Framework for Non-Convex Stochastic Optimization\tnoteref{t1}}

	
	\tnotetext[t1]{This work was funded in part by the National Natural Science Foundation of China (Nos. 62176051, 62272096), in part by National Key R\&D Program of China (No. 2021YFA1003400), and in part by the Fundamental Research Funds for the Central Universities of China (No. 2412020FZ024).}
	
	\author[nenu]{Yiming Jiang}
	\author[nenu]{Jinlan Liu}
	\author[nenu]{Dongpo Xu\corref{cor1}}
	\ead{xudp100@nenu.edu.cn}
	\author[danilo]{Danilo P. Mandic\corref{cor1}}
	\ead{d.mandic@imperial.ac.uk}
	
	\cortext[cor1]{Corresponding authors}
	\address[nenu]{Key Laboratory for Applied Statistics of MOE, School of Mathematics and Statistics, Northeast Normal University, Changchun 130024, P. R. China.}
	\address[danilo]{Department of Electrical and Electronic Engineering, Imperial College London, SW7 2AZ London, UK}

	\begin{abstract}
		Adam-type algorithms have become a preferred choice for optimisation in the deep learning setting, however, despite success, their convergence is still not well understood. To this end, we introduce a unified framework for Adam-type algorithms (called UAdam). This is equipped with a general form of the second-order moment, which makes it possible to include Adam and its variants as special cases, such as NAdam, AMSGrad, AdaBound, AdaFom, and Adan. This is supported by a rigorous convergence analysis of UAdam in the non-convex stochastic setting, showing that UAdam converges to the neighborhood of stationary points with the rate of $\mathcal{O}(1/T)$. Furthermore, the size of neighborhood decreases as $\beta$ increases. Importantly, our analysis only requires the first-order momentum factor to be close enough to 1, without any restrictions on the second-order momentum factor. Theoretical results also show that vanilla Adam can converge by selecting appropriate hyperparameters, which provides a theoretical guarantee for the analysis, applications, and further developments of the whole class of Adam-type algorithms.
	\end{abstract}
	
	\begin{keyword}
		Non-convex optimization \sep Unified Adam \sep Convergence analysis \sep Variance recursion estimation \sep Deep learning.
	\end{keyword}
	
\end{frontmatter}



\section{Introduction}

Deep neural networks have achieved great success in manifold areas including computer vision \cite{computer-version}, image recognition \cite{imagerecognition}, and natural language processing \cite{text-classification,Speech-Emotion-Recognition}. Training of deep neural networks typically considers the following non-convex stochastic optimization setting
\begin{equation}\label{problem}
	\min_{x\in\R^d} f(x)=\E_{\xi\sim\PP}\left[f(x,\xi)\right],
\end{equation}
where $x$ is the model parameter to be optimized, $\xi$ denotes a random variable drawn from some unknown probability distribution $\PP$, and $f(x,\xi)$ designates a differentiable non-convex loss function. The most popular approach to solve the optimization problem in \eqref{problem} is the class of first-order methods based on stochastic gradient. The Stochastic gradient descent (SGD) algorithm \cite{SGD2,SGD1} is widely used due to its simplicity and efficiency, as it updates the model along a negative gradient direction scaled by the stepsize. However, the SGD algorithm may suffer from slow convergence and even become trapped in local minima, particularly for large models and ill-conditioned problems. To address these issues, momentum techniques have been introduced, such as the Heavy Ball (HB) acceleration algorithm proposed by Polyak \cite{HB}, of which the update rule is given by
\begin{equation}\label{eq:HB1}
	\text{SHB: }
	x_{t+1}=x_t-\alpha\nabla f\left(x_t,\xi_t\right)+\beta\left(x_t-x_{t-1}\right),
\end{equation}
where $x_0=x_1\in\R^d$, $\alpha>0$ is the stepsize, and $\beta$ is the (convex) momentum factor, which takes the value $0\leq\beta<1$. Another popular acceleration technique was proposed by Nesterov \cite{NAG} and is referred to as the Nesterov accelerated gradient (NAG), given by \cite{NAG2}
\begin{equation}\label{eq:NAG1}
	\text{SNAG: }
	\left\{
	\begin{split}
		&m_{t}=\beta m_{t-1}-\alpha\nabla f\left(x_t+\beta m_{t-1},\xi_t\right) \\
		&x_{t+1}=x_t+m_t
	\end{split}
	\right.,
\end{equation}
where $\alpha>0$ is the stepsize and $\beta\in[0,1)$ is the momentum factor. Physically, NAG takes a small step from $x_t$ in the direction of the historical gradient, $m_{t-1}$, and utilizes an exponential moving average with the “lookahead gradient” to update the parameters.  It is worth noting that the Nesterov acceleration method converges faster than the Heavy Ball method \cite{Linzhouchen,Nesterov,NAG2}. 

The choice of stepsize is crucial to the convergence performance of SGD, which often requires a fixed stepsize or a sequence of diminishing stepsizes \cite{siam,sun}. Therefore, adaptive learning rate algorithms have emerged as a popular alternative, such as AdaGrad \cite{Adagrad}, RMSProp \cite{RMSprop,rmspropw}, and Adam \cite{Adam}, which adaptively adjust the learning rate based on the second-order moment of historical gradients. For example, AdaGrad \cite{Adagrad} accumulates all past squared gradients element-wise to adjust the stepsize, whereby larger learning rates are assigned to the dimensions with smaller gradients. However, along the iteration process, accumulation can cause excessively small stepsizes, leading to the phenomenon of gradient vanishing. To help resolve this issue, Tieleman and Hinton \cite{RMSprop} proposed the RMSProp algorithm, which replaces the accumulation within the AdaGrad with an exponential moving average. On this basis, Adam \cite{Adam} combines the momentum strategy with RMSProp, and has become is the most popular adaptive method in deep learning. 

Although Adam has performed well in practice, Reddi $et$ $al$. \cite{Reddi} have pointed out that Adam may still be divergent, even for simple convex problems. To this end, researchers have proposed variants of Adam, such as AMSGrad \cite{Reddi}, AdaBound \cite{xuadabound,AdaBound}, AdaFom \cite{Chen}, and Yogi \cite{Yogi}, which differ only in the second-order moments. However, such developments have been rather heuristic. This motivates us to propose a unified framework for the treatment of adaptive momentum algorithms, which incorporates Adam, AdaFom, AMSGrad, AdaBound, and many other algorithms as special cases. Recently, Zhang $et$ $al$. \cite{Adam-without-modification} claimed that Adam can converge without modification of update rules and pointed out that the divergence problem proposed by Reddi $et$ $al$. \cite{Reddi} has a flaw, that is, the parameters are determined first, followed by problem selection. In other words, $\beta$-dependency examples are picked for different pairs of first- and second-order moment parameters $(\beta_1,\beta_2)$ to make Adam diverge. However, in practical applications, the parameter pair $(\beta_1,\beta_2)$ needs to be tuned for a given optimization problem. Therefore, when the problem is given first, Adam can guarantee convergence by appropriately selecting hyperparameters. The convergence results in \cite{Adam-without-modification} require the second moment parameter, $\beta_2$, to be sufficiently large. In contrast, our analysis does not impose any restrictions on the second moment parameter, $\beta_2$, and demonstrates that UAdam, equipped by various forms of the second-order moment, can converge by choosing an appropriate first-order momentum parameter, $\beta_1$, which is consistent with practical observations.

\subsection{Related work}

Although Adam and its variants have achieved remarkable success in training deep neural networks, their theoretical analyses \cite{Adagrad,Adam} have primarily considered online convex settings, and are thus unable to shed light on the convergence in non-convex settings, which are typically encountered in practice. Among the attempts to prove the convergence of Adam and its variants in non-convex settings, Li and Orabona \cite{Li} provided the convergence rate and high probability bound for the generalized global AdaGrad stepsize in a non-convex setting. Moreover, Ward $et$ $al$. \cite{Ward} demonstrated that the norm version of AdaGrad (AdaGrad-Norm) converges to a stationary point at the $\mathcal{O}(\log(T)/\sqrt{T})$ rate in the stochastic setting. Both Chen $et$ $al$. \cite{Chen} and Guo $et$ $al$. \cite{guo} analyzed the convergence performance of a class of Adam algorithms; their analyses are modular and can be extended to solve other optimization problems such as combinatorial and min-max problems \cite{guo}. Zaheer $et$ $al$. \cite{Yogi} studied the impact of minibatch size on the convergence performance of Adam, showing that increasing minibatch sizes facilitates convergence. They also proposed a novel adaptive optimization method (called Yogi) that can control the increase of the effective learning rates so as to achieve better performance. Zou $et$ $al$. \cite{zou} introduced easy-to-check sufficient conditions to ensure the global convergence of Adam and its variants, and provided a new explanation for the divergence of Adam, which may be caused by incorrect setting of second-order moment parameters. D{\'e}fossez $et$ $al$. \cite{1} provided an arbitrarily small upper bound for AdaGrad and Adam, showing that these algorithms can converge at a rate of $\mathcal{O}(d\log(T)/\sqrt{T})$, with an appropriate hyperparameter setting. Furthermore, Zhou $et$ $al$. \cite{Zhou} proved that AMSGrad, modified RMSProp, and AdaGrad converged at a rate of $\mathcal{O}(1/\sqrt{T})$ under the bounded gradient assumption. In addition, Zhang $et$ $al$. \cite{Adam-without-modification} indicated that Adam can converge without modifying the update rules, and does not require bounded gradient or bounded second-order moment assumptions, while Wang $et$ $al$. \cite{L0L1} further analyzed the convergence of Adam under the ($L_0$,$L_1$) smoothness condition. It is worth noting that the above Adam-type algorithms rely on the Heavy Ball method for estimating the first-order moment of the stochastic gradient. 

Recently, the Nesterov acceleration method has been used within both the first- and second-order moment for adaptive learning algorithms. One example is work by Dozat \cite{NAdam} who proposed an adaptive algorithm called NAdam that combines the Nesterov acceleration method and RMSProp. Although NAdam has performed well in practical experiments, there is no supporting theoretical analysis to guarantee convergence. To address this issue, Zou $et$ $al$. \cite{AdaUSM} proposed a new adaptive stochastic momentum algorithm, by combining the weighted coordinate-wise AdaGrad with a unified momentum. They established a non-asymptotic convergence rate of $\mathcal{O}(\log(T)/\sqrt{T})$ under a non-convex setting, thus providing a new perspective on the convergence of Adam and RMSProp. Moreover, Xie $et$ $al$. \cite{Adan} proposed an adaptive Nesterov momentum (Adan) for estimating the first- and second-order moments of the gradient, and proved that Adan requires $\mathcal{O}(\epsilon^{-4})$ stochastic gradient complexity to find an $\epsilon$-stationary point in a non-convex setting.

\begin{table}[h]
	\renewcommand\arraystretch{1}
	\tabcolsep=0.5mm
	\caption{Comparison of different adaptive algorithms. The symbol $\uparrow$ denotes an increase with the iterations or when close enough to 1, $\downarrow$ denotes a decrease with the iterations, ``-'' designates no any restrictions, which `` constant'' means any constant in $[0,1)$. $T$ denotes the number of iterations.}
	\label{table:results}
	\centering
	\scalebox{0.9}{
		\begin{adjustbox}{center}
			\begin{threeparttable}
				\begin{tabular}{c c c c c c}
					\toprule
					\textbf{Optimizer}  & \textbf{Setting} & \textbf{\makecell{First moment\\ parameter}} & \textbf{\makecell{Second moment\\ parameter}} & \textbf{\makecell{Convergence \\ rate}}\\
					\midrule
					Adam \cite{Adam} & convex & $\downarrow$ & constant & no \\
					\midrule
					AMSGrad \cite{Reddi} & convex & non-$\uparrow$ & constant & $\mathcal{O}(1/\sqrt{T})$ \\
					\midrule
					Adam \cite{Chen} & non-convex & non-$\uparrow$ & constant & no \\
					\midrule
					Adam \cite{1,zou} & non-convex & constant & $\uparrow$ & $\mathcal{O}(\ln T/\sqrt{T})$ \\
					\midrule
					Adam \cite{Adam-without-modification} & non-convex & constant & $\uparrow$ & $\mathcal{O}(\ln T/\sqrt{T})+\mathcal{O}(\sqrt{D_0})$\tnote{**} \\
					\midrule
					Adam/Yogi \cite{Yogi} & non-convex & 0 & $\uparrow$ & $\mathcal{O}(1/T+1/b)$\tnote{*} \\
					\midrule
					Adam-style \cite{guo} & non-convex & $\uparrow$ & - & $\mathcal{O}(1/T)+\mathcal{O}(D_0)$\tnote{**} \\
					\midrule
					AMSGrad \cite{Chen} & non-convex & non-$\uparrow$ & $\uparrow$ & $\mathcal{O}(\ln T/\sqrt{T})$ \\
					\midrule
					Adan \cite{Adan} & non-convex & $\uparrow$ & $\uparrow$ & $\mathcal{O}(1/T)+\mathcal{O}(D_0)$\tnote{**} \\
					\midrule
					\textbf{UAdam(ours)}  & non-convex & $\uparrow$ & - & $\mathcal{O}(1/T)+\mathcal{O}(D_0)$\tnote{**} \\
					\bottomrule
				\end{tabular}
				\begin{tablenotes} 
					\footnotesize 
					\item[*] $b$ denotes as mini-batch size. 
					\item[**] $D_0$ is from the weak growth assumption, i.e., $\E_t\left[\left\|g_t-\nabla f(x_t)\right\|^2\right]\leq D_0+D_1\left\|\nabla f(x_t)\right\|^2$. 
				\end{tablenotes} 
			\end{threeparttable}
	\end{adjustbox}}
\end{table}

However, all the above results investigated the convergence of Adam, NAdam, or their variants in a separate form, which makes their comparison difficult and non-obvious. To this end, we here proceed further and introduce a platform for the study of adaptive stochastic momentum algorithms under one general umbrella which encompasses Adam, NAdam, and their variants as special cases. In this way, the proposed unified Adam (UAdam) establishes a unified platform for both the analysis of the existing and the development of future SGD algorithms in deep learning. Finally, Table \ref{table:results} summarizes some existing results and highlights the strengths and the potential of UAdam. 

\subsection{Contributions}

The main contributions of this work can be summarized as follows
\begin{itemize}
	\item We propose a unified framework for the treatment of adaptive stochastic momentum algorithms, called UAdam, which combines the classes of adaptive learning rate and unified momentum methods. The UAdam therefore incorporates existing deep learning optimizers, such as Adam, AMSGrad, NAdam, and Adan as special cases. 
	\item Without any restrictions on the second-order momentum parameter, $\beta_2$, we only need the first-order momentum parameter $\beta_1$ to be close enough to 1 to ensure the convergence of UAdam, which is consistent with the actual hyperparameter settings.
	\item We prove that UAdam can converge to the neighborhood of stationary points with the rate of $\mathcal{O}(1/T)$ in smooth and non-convex settings, and that the size of neighborhood decreases as $\beta$ increases. In addition, under an extra condition (strong growth condition), we can obtain that Adam converges to stationary points. Furthermore, through choice of the interpolation factor $\lambda$, UAdam allows us to immediately obtain the convergence of both Adam-type and NAdam-type algorithms. 
\end{itemize}

The rest of this paper is organized as follows. Section \ref{sec:notation-ass} introduces the notations and assumptions. Section \ref{sec:algorithms} presents a unified framework for the adaptive stochastic momentum algorithms, called UAdam. The technical lemmas and the main convergence results with the rigorous proofs are presented in Section \ref{sec:results}. Finally, this paper concludes with Section \ref{sec:conclusion}.

\section{Preliminaries}\label{sec:notation-ass}
\noindent\textbf{Notations.}
Let $[T]$ be the set $\{1,2,\ldots,T\}$, and denote by $\|\cdot\|$ the $\ell_2$ norm of a vector or the spectral norm of a matrix, if not otherwise specified. For any $t\in[T]$, we use $g_t$ to denote a stochastic gradient of the objective function $f$ at the $t$-th iteration, $x_t$. The symbol $\E_t\left[\cdot\right]$ designates the conditional expectation with respect to $g_t$, conditioned on the past $g_1,g_2,\ldots,g_{t-1}$, while $\E\left[\cdot\right]$ denotes the expectation with respect to the underlying probability space. For any $x_t\in\R^d$, the $i$-th element of $x_t$ is denoted by $x_{t,i}$. All operations on vectors are executed in a coordinate-wise sense, so that for any $x,y\in\R^d,p>0$, $x/y=(x_1/y_1,x_2/y_2,\ldots,x_d/y_d)^T$ and $x^p=(x_1^p,x_2^p,\ldots,x_d^p)^T$.

To analyze the convergence performance of UAdam, we introduce some necessary assumptions.
\begin{assumption}\label{ass:smooth}
	The objective function $f$ is lower bounded by $f_\ast\geq-\infty$ and its gradient $\nabla f$ is $L$-Lipschitz continuous, that is 
	\begin{equation}
		\left\|\nabla f(x)-\nabla f(y)\right\|\leq L\left\|x-y\right\|,~\forall x,y\in\R^d.
	\end{equation}
\end{assumption}

\begin{assumption}\label{ass:unbiased}
	The stochastic gradient $g_t$ is an unbiased estimate of the true gradient $\nabla f(x_t)$, i.e., $\E_t\left[g_t\right]=\nabla f(x_t)$.
\end{assumption}

\begin{assumption}\label{ass:weak}
	The variance of the stochastic gradient $g_t$ satisfies the weak growth condition (WGC), i.e., for some $D_0,D_1>0$, 
	\begin{equation}
		\E_t\left[\left\|g_t-\nabla f(x_t)\right\|^2\right]\leq D_0+D_1\left\|\nabla f(x_t)\right\|^2.
	\end{equation}
\end{assumption}

\begin{remark}
	It is worth noting that the first two assumptions are standard and are frequently used in \cite{Chen,1,Adan,zou}. When $D_1=0$, Assumption \ref{ass:weak} becomes the standard bounded variance condition. Therefore, Assumption \ref{ass:weak} is weaker than the bounded variance condition \cite{Towards,Yogi}. When $D_1\neq0$, the gradient-based algorithms only converge to a bounded neighborhood of stationary points and its neighborhood size is proportional to $D_0$ \cite{better-sgd,Adam-without-modification}. When $D_0=0$, Assumption \ref{ass:weak} is called the strong growth condition (SGC). 
\end{remark}

\section{Unified adaptive stochastic momentum algorithms}\label{sec:algorithms}

\subsection{Stochastic unified momentum algorithms}

Assume that given $x\in\R^d$, it returns a stochastic gradient $\nabla f(x,\xi)$ of the objective function, $f$, defined by the problem \eqref{problem}, where $\xi$ is a random variable satisfying an unknown distribution $\PP$. We use $g_t$ to denote the stochastic gradient $\nabla f(x_t,\xi_t)$ at the $t$-th iteration $x_t$.

By introducing $\eta=\alpha/(1-\beta)$ and $m_t=(x_t-x_{t+1})/\eta$, with $m_0=0$, the stochastic Heavy Ball (SHB) \eqref{eq:HB1} update becomes
\begin{equation}\label{eq:HB2}
	\text{SHB: }
	\left\{
	\begin{split}
		&m_t=\beta m_{t-1}+\left(1-\beta\right) g_t\\
		&x_{t+1}=x_t-\eta m_t
	\end{split}
	\right.,
\end{equation}
Moreover, we consider the form of stochastic NAG (SNAG) given by
\begin{equation}\label{eq:NAG2}
	\text{SNAG: }
	\left\{
	\begin{split}
		&m_t=\beta m_{t-1}+(1-\beta) g_t \\
		&x_{t+1} = x_{t}-\eta\beta m_t-\eta(1-\beta) g_t
	\end{split}
	\right.,
\end{equation}
which is equivalent to SNAG \eqref{eq:NAG1} but easier to analyze (see Proposition \ref{pro:NAG} for the proof). 
Therefore, the updates of SHB in \eqref{eq:HB2} and SNAG in \eqref{eq:NAG2} can be written in the form of the stochastic unified momentum (SUM) as follows
\begin{equation}\label{eq:SUM1}
	\text{SUM}_1\text{: }
	\left\{
	\begin{split}
		&m_t=\beta m_{t-1}+(1-\beta) g_t \\
		&\bar{m}_t=m_t-\tilde{\lambda} (m_t-g_t) \\
		&x_{t+1} = x_{t}-\eta \bar{m}_t
	\end{split}
	\right.,
\end{equation}
where $\tilde{\lambda}=(1-\beta)\lambda\in[0,1]$, $\lambda\in[0,1/(1-\beta)]$ is a interpolation factor, and $\beta$ is a momentum parameter. Observe that when $\lambda=0$, $\tilde{\lambda}=0$, SUM in \eqref{eq:SUM1} becomes SHB in \eqref{eq:HB2}; when $\lambda=1$, $\tilde{\lambda}=1-\beta$, SUM in \eqref{eq:SUM1} becomes SNAG in \eqref{eq:NAG2}; when $\lambda=1/(1-\beta)$, $\tilde{\lambda}=1$, SUM in \eqref{eq:SUM1} becomes SGD.

\begin{remark}
	Xie $et$ $al$. \cite{Adan} developed a Nesterov momentum estimation (NME) method to estimate the first-order moment of the stochastic gradient, with the update given by
	\begin{equation}\label{eq:NME}
		\text{NME: }
		\left\{
		\begin{split}
			&\bar{m}_t=\beta \bar{m}_{t-1}+(1-\beta)\left(g_t+\beta\left(g_t-g_{t-1}\right)\right)\\
			&x_{t+1}=x_t-\eta \bar{m}_t
		\end{split}
		\right.,
	\end{equation}
	It is worth mentioning that the above NME is fundamentally equivalent to SNAG in \eqref{eq:NAG2}. A detailed proof of the equivalence is given in Proposition \ref{pro:NME}.
\end{remark}

\begin{remark}
	Liu $et$ $al$. \cite{SUM} unified SHB and SNAG in the following form
	\begin{equation}\label{eq:SUM2}
		\text{SUM}_2\text{: }
		\left\{
		\begin{split}
			&m_t=\mu m_{t-1}-\eta_t g_t\\
			&x_{t+1} = x_{t}-\lambda\eta_t g_t+(1-\tilde{\lambda})m_t
		\end{split}
		\right.,
	\end{equation}
	where $\tilde{\lambda}:=(1-\mu)\lambda$. Notice that SUM in \eqref{eq:SUM2} is functionally equivalent to SUM in \eqref{eq:SUM1}, except for some parameter variations. More detailed information on this observation is provided in Proposition \ref{pro:SUM}.
\end{remark}

\subsection{Adaptive learning rate}

Next, we investigate adaptive learning rates with the bounded assumption, which can cover a large class of adaptive gradient algorithms, as shown in Table \ref{table:1}. 

\begin{assumption}\label{ass:bound}
	The adaptive learning rate, $\eta_t$, is upper bounded and lower bounded, i.e., there exists $0<\eta_l<\eta_u$, such that $\forall i\in[d],~\eta_l\leq\eta_{t,i}\leq\eta_u$, where $\eta_{t,i}$ denotes the $i$-th component of $\eta_t$.
\end{assumption}

\begin{table}[H]
	\caption{Forms of the learning rate, $\eta_t$, and their compliance with Assumption \ref{ass:bound}}
	\label{table:1}
	\centering
	\resizebox{\textwidth}{!}{
		\begin{tabular}{c c c c}
			\toprule
			\textbf{Optimizer}&\textbf{Learning rate $\eta_t$} & \textbf{\makecell{Additional\\ assumption}} & \textbf{$\eta_l$ and $\eta_u$}\\
			\midrule
			SUM \cite{SUM}&$\eta_{t}=\eta$ &-& $\eta_l=\eta,\eta_u=\eta$\\
			\midrule
			Adam \cite{Adam}&$\makecell{v_t=\beta_2 v_{t-1}+\left(1-\beta_2\right)g_t^2,\\ \eta_t=\eta/\sqrt{v_t}+\epsilon}$ & $\|g_t\|_\infty\leq G$ & $\eta_l=\frac{\eta}{G+\varepsilon},\eta_u=\frac{\eta}{\varepsilon}$ \\
			\midrule
			AMSGrad \cite{Reddi}&$\makecell{\bar{v}_t=\beta_2 \bar{v}_{t-1}+\left(1-\beta_2\right)g_t^2,\\v_t=\max(v_{t-1},\bar{v}_t),\eta_t=\eta/\sqrt{v_t}+\epsilon}$ & $\|g_t\|_\infty\leq G$ & $\eta_l=\frac{\eta}{G+\varepsilon},\eta_u=\frac{\eta}{\varepsilon}$ \\
			\midrule
			AdaFom \cite{Chen}&$v_t=\frac{1}{t}\sum\limits_{i=1}^{t}g_i^2,\eta_t=\eta/\sqrt{v_t}+\epsilon$ & $\|g_t\|_\infty\leq G$ & $\eta_l=\frac{\eta}{G+\varepsilon},\eta_u=\frac{\eta}{\varepsilon}$ \\
			\midrule
			AdaBound \cite{AdaBound}&$\makecell{\bar{v}_t=\beta_2 \bar{v}_{t-1}+\left(1-\beta_2\right)g_t^2,\\v_t=\text{Clip}(\bar{v}_t,1/c_u^2,1/c_l^2),\eta_t=\eta/\sqrt{v_t}}$ & - & $\eta_l=\eta c_l,\eta_u=\eta c_u$ \\
			\midrule
			Yogi \cite{Yogi}&$\makecell{v_t=v_{t-1}-\left(1-\beta_2\right)\text{sign}(v_{t-1}-g_t^2)g_t^2,\\ \eta_t=\eta/\sqrt{v_t}+\epsilon}$ & $\|g_t\|_\infty\leq G$ & $\eta_l=
			\frac{\eta}{\sqrt{2}G+\varepsilon},\eta_u=\frac{\eta}{\varepsilon}$ \\
			\midrule
			AdaEMA \cite{zou}&$\makecell{v_t=\frac{1}{W_t}\sum\limits_{i=1}^{t}\omega_ig_i^2, W_t=\sum\limits_{i=1}^{t}\omega_i,\\ \eta_t=\eta/\sqrt{v_t}+\epsilon}$ & $\|g_t\|_\infty\leq G$ & $\eta_l=\frac{\eta}{G+\varepsilon},\eta_u=\frac{\eta}{\varepsilon}$ \\
			\midrule
			Adan \cite{Adan}&$\makecell{v_t=(1-\beta_2)v_{t-1}\\+\beta_2(g_t+(1-\beta_1)(g_t-g_{t-1}))^2,\\ \eta_t=\eta/\sqrt{v_t}+\epsilon}$ & $\|g_t\|_\infty\leq G/3$ & $\eta_l=
			\frac{\eta}{G+\varepsilon},\eta_u=\frac{\eta}{\varepsilon}$ \\
			\midrule
			SAdam \cite{sadam}&$\makecell{v_t=\beta_2 v_{t-1}+\left(1-\beta_2\right)g_t^2,\\ \text{softplus}(x)=\log(1+\exp(\theta x))/\theta, \\ \eta_t=\eta/\text{softplus}(\sqrt{v_t}) }$ & $\|g_t\|_\infty\leq G$ & \makecell{$\eta_l=
				\frac{\eta\theta}{\log(1+\exp(\theta G))}$, \\ $\eta_u=\frac{\eta\theta}{\log2}$} \\
			\bottomrule
	\end{tabular}}
\end{table}

\begin{remark}
	Under the bounded stochastic gradient condition, Adam and its variants, such as AMSGrad \cite{Reddi}, AdaEMA \cite{zou}, and Adan \cite{Adan} can satisfy Assumption \ref{ass:bound}. Even when the boundedness condition is not satisfied, we can still use the clipping technique \cite{AdaBound} to make Assumption \ref{ass:bound} hold. We emphasize that the additional assumption $\|g_t\|_\infty\leq G$ is often required in the convergence analysis of the Adam-type algorithms \cite{Chen,1,Yogi,zou}.
\end{remark}

\begin{remark}
	Existing convergence analyses \cite{1,Adam-without-modification,zou} require the second-order momentum factor, $\beta_2$, to be close to 1 to guarantee the convergence of Adam. In contrast, we do not impose any restrictions on $\beta_2$, and only need boundedness of stochastic gradients to satisfy Assumption \ref{ass:bound}.
\end{remark}

\subsection{UAdam: Unified adaptive stochastic momentum algorithm}

In this section, we present a unified framework for adaptive stochastic momentum algorithm, termed UAdam, which effectively integrates SUM in \eqref{eq:SUM1} with a class of adaptive learning rate methods satisfying Assumption \ref{ass:bound}. The pseudocode for the UAdam algorithm is given in Algorithm \ref{UAdam}.

\begin{algorithm}[H]
	\caption{UAdam: Unified Adaptive Stochastic Momentum Algorithm}
	\label{UAdam}
	\begin{algorithmic}[1]
		\Require
		First-order moment factor $\beta\in[0,1)$, interpolation factor $\lambda\in[0,1/(1-\beta)],\tilde{\lambda}=(1-\beta)\lambda$.
		\Ensure
		$x_1\in\R^d,~m_0=0$
		\For{$t=1,2,\ldots,T$}
		\State Sample an unbiased stochastic gradient estimator: $g_t=\nabla f(x_t,\xi_t)$
		\State $m_t=\beta m_{t-1}+(1-\beta) g_t$
		\State $\bar{m}_t=m_t-\tilde{\lambda} (m_t-g_t)$
		\State $\eta_t=h_t\left(g_1,g_2,\ldots,g_t\right)$ (See different forms of $\eta_t$ in Table \ref{table:1})
		\State $x_{t+1} = x_{t}-\eta_t \bar{m}_t$
		\EndFor
	\end{algorithmic}
\end{algorithm}

\begin{remark}\label{remark:1}		
	Notice that when the interpolation factor, $\lambda$, and the adaptive learning rate, $\eta_t$, take different forms, UAdam corresponds to different deep learning algorithms. For example, if the learning rate is taken as $\eta_t=\eta/\sqrt{v_t}+\epsilon$ with $v_t=\beta_2 v_{t-1}+\left(1-\beta_2\right)g_t^2$, then for $\lambda=0$, UAdam degenerates into the original Adam, while when $\lambda=1$, UAdam becomes NAdam \cite{NAdam}. Alternatively, if the learning rate is taken as $\eta_t=\eta/\sqrt{v_t}+\epsilon$ with $\bar{v}_t=\beta_2 \bar{v}_{t-1}+\left(1-\beta_2\right)g_t^2$, $v_t=\max(v_{t-1},\bar{v}_t)$ and $\lambda=0$, UAdam becomes AMSGrad \cite{Reddi}. Similarly, when the learning rate is taken as $\eta_t=\eta/\sqrt{v_t}+\epsilon$ with $v_t=(1-\beta_2) v_{t-1}+\beta_2\left(g_t+\left(1-\beta_1\right)(g_t-g_{t-1})\right)^2$ and $\lambda=1$, it follows from Proposition \ref{pro:NME} that UAdam becomes Adan \cite{Adan}.
\end{remark}

\section{Main results}\label{sec:results}

\subsection{Technical lemmas}\label{sec:lemma}

We next provide some lemmas, which play an essential role in the convergence analysis of UAdam. The key to the convergence analysis is the estimation of the variance of stochastic exponential moving average sequences, as shown in Lemma \ref{le:VR}.

\begin{lemma}\label{le:VR}
	Consider a stochastic exponential moving average sequence, $m_t=\beta_tm_{t-1}+\left(1-\beta_t\right)g_t$, where $0\leq\beta_t<1$. Suppose that Assumptions \ref{ass:smooth} and \ref{ass:unbiased} hold. Then, 
	\begin{equation}\label{our}
		\begin{split}
			\E_t\left[\left\|m_t-\nabla f(x_t)\right\|^2\right]
			\leq&\beta_t\left\|m_{t-1}-\nabla f(x_{t-1})\right\|^2+\frac{\beta_t^2}{1-\beta_t}L^2\left\|x_{t}-x_{t-1}\right\|^2\\
			&+\left(1-\beta_t\right)^2\E_t\left[\left\|g_t-\nabla f(x_{t})\right\|^2\right].
		\end{split}
	\end{equation}
	
\end{lemma}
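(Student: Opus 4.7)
The plan is to decompose $m_t - \nabla f(x_t)$ in a way that isolates a zero-mean stochastic part from a deterministic part, take conditional expectation so that the cross term with the stochastic part vanishes, and then apply Young's inequality together with the $L$-Lipschitz property of $\nabla f$.

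First I would rewrite the error recursion by adding and subtracting $\beta_t \nabla f(x_{t-1})$ and $\beta_t \nabla f(x_t)$:
\begin{equation*}
m_t - \nabla f(x_t) = \beta_t\bigl(m_{t-1}-\nabla f(x_{t-1})\bigr) + \beta_t\bigl(\nabla f(x_{t-1})-\nabla f(x_t)\bigr) + (1-\beta_t)\bigl(g_t-\nabla f(x_t)\bigr).
\end{equation*}
The first two summands are measurable with respect to the sigma-algebra generated by $g_1,\ldots,g_{t-1}$, while the last has zero conditional mean by Assumption \ref{ass:unbiased}. Taking $\E_t[\|\cdot\|^2]$ therefore makes the cross term against $(1-\beta_t)(g_t-\nabla f(x_t))$ vanish, leaving exactly the deterministic square plus $(1-\beta_t)^2\E_t[\|g_t-\nabla f(x_t)\|^2]$.

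Next I would split the deterministic square using Young's inequality $\|a+b\|^2\le(1+c)\|a\|^2+(1+1/c)\|b\|^2$ with $a=\beta_t(m_{t-1}-\nabla f(x_{t-1}))$ and $b=\beta_t(\nabla f(x_{t-1})-\nabla f(x_t))$. The key calibration is to choose $c$ so that the first coefficient collapses to exactly $\beta_t$: picking $c=(1-\beta_t)/\beta_t$ gives $\beta_t^2(1+c)=\beta_t$ and $\beta_t^2(1+1/c)=\beta_t^2/(1-\beta_t)$, which matches the target bound. Finally, Assumption \ref{ass:smooth} applied to the gradient-difference term gives $\|\nabla f(x_{t-1})-\nabla f(x_t)\|^2\le L^2\|x_t-x_{t-1}\|^2$, completing the inequality \eqref{our}.

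The argument is essentially mechanical once the right Young's-inequality parameter is chosen, so the only subtle point is that calibration: a naive splitting gives $\beta_t^2$ (rather than $\beta_t$) in front of $\|m_{t-1}-\nabla f(x_{t-1})\|^2$, which would not telescope properly in later use. The specific choice $c=(1-\beta_t)/\beta_t$ is what makes the recursion contract with the correct rate $\beta_t$ and simultaneously produces the $\beta_t^2/(1-\beta_t)$ factor on the Lipschitz term needed in the subsequent convergence analysis. No additional assumptions beyond \ref{ass:smooth} and \ref{ass:unbiased} are required, and in particular Assumption \ref{ass:weak} is not invoked at this stage.
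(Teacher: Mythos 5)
Your proposal is correct and follows essentially the same route as the paper: the identical three-term decomposition of $m_t-\nabla f(x_t)$, the observation that the martingale-difference part $(1-\beta_t)(g_t-\nabla f(x_t))$ is the only piece that is not $\mathcal{F}_{t-1}$-measurable so its cross terms vanish under $\E_t$, Young's inequality with the calibration $c=(1-\beta_t)/\beta_t$, and the Lipschitz bound on $\nabla f(x_{t-1})-\nabla f(x_t)$. The only cosmetic difference is that the paper expands the full square and names the two cross terms $\spadesuit$ and $\clubsuit$, whereas you first regroup into a measurable part $\beta_t(m_{t-1}-\nabla f(x_t))$ plus the zero-mean part before applying Young; this yields the same coefficients $\beta_t$ and $\beta_t^2/(1-\beta_t)$ and is, if anything, a slightly cleaner presentation.
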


\begin{proof}
	According to the definition of $m_t$, we have
	\begin{equation}\label{eq:1}
		\begin{split}
			m_t-\nabla f(x_t)&=\beta_tm_{t-1}+\left(1-\beta_t\right)g_t-\nabla f(x_{t})\\
			&=\beta_t\left(m_{t-1}-\nabla f(x_{t-1})\right)+\left(1-\beta_t\right)\left(g_t-\nabla f(x_{t})\right)\\&\quad+\beta_t\left(\nabla f(x_{t-1})-\nabla f(x_{t})\right).
		\end{split}
	\end{equation}
	Upon taking the squared norm of \eqref{eq:1}, we obtain
	\begin{equation}\label{eq:2}
		\begin{split}
			&\left\|m_t-\nabla f(x_t)\right\|^2
			=\beta_t^2\left\|m_{t-1}-\nabla f(x_{t-1})\right\|^2 \\&\qquad\qquad\qquad\; +\beta_t^2\left\|\nabla f(x_{t-1})-\nabla f(x_{t})\right\|^2 + \left(1-\beta_t\right)^2\left\|g_t-\nabla f(x_{t})\right\|^2 \\&\qquad\qquad\qquad\; +2\beta_t\left(1-\beta_t\right)\underbrace{\left\langle m_{t-1}-\nabla f(x_{t}),g_t-\nabla f(x_{t}) \right\rangle}_\spadesuit \\&\qquad\qquad\qquad\; +2\beta_t^2\underbrace{\left\langle m_{t-1}-\nabla f(x_{t-1}),\nabla f(x_{t-1})-\nabla f(x_{t}) \right\rangle}_\clubsuit.
		\end{split}
	\end{equation}
	For the term $\spadesuit$ in \eqref{eq:2}, upon taking the conditional expectation, under the condition that $g_1,\ldots,g_{t-1}$ are known, then both $x_t$ and $m_{t-1}$ are measurable. Since $\E_t\left[g_t\right]=\nabla f(x_t)$ by Assumption \ref{ass:unbiased}, we further obtain
	\begin{equation}\label{eq:2-1}
		\E_t\left[\spadesuit\right]=\left\langle m_{t-1}-\nabla f(x_{t}),\E_t\left[g_t\right]-\nabla f(x_{t}) \right\rangle=0.
	\end{equation}
	Regarding the term $\clubsuit$ in \eqref{eq:2}, from the fact that $\left\langle a,b\right\rangle \leq\epsilon/2\left\|a\right\|^2+1/2\epsilon\left\|b\right\|^2$, $\forall\epsilon>0$, let $a=m_{t-1}-\nabla f(x_{t-1}),b=\nabla f(x_{t-1})-\nabla f(x_{t})$, and $\epsilon=(1-\beta_t)/\beta_t$. It then follows that 
	\begin{equation}\label{eq:2-2}
		\clubsuit\leq\frac{1-\beta_t}{2\beta_t}\left\|m_{t-1}-\nabla f(x_{t-1})\right\|^2+\frac{\beta_t}{2(1-\beta_t)}\left\|\nabla f(x_{t-1})-\nabla f(x_{t})\right\|^2.
	\end{equation}
	Upon taking the conditional expectation of \eqref{eq:2}, and inserting \eqref{eq:2-1} and \eqref{eq:2-2} into \eqref{eq:2}, and under the condition that $g_1,\ldots,g_{t-1}$ are known, and that $x_t$, $x_{t-1}$, and $m_{t-1}$ are measurable, we have
	\begin{equation}
		\begin{split}
			\E_t\left[\left\|m_t-\nabla f(x_t)\right\|^2\right]
			\leq&\beta_t\left\|m_{t-1}-\nabla f(x_{t-1})\right\|^2\\&+\frac{\beta_t^2}{1-\beta_t}\left\|\nabla f(x_{t-1})-\nabla f(x_{t})\right\|^2\\&+\left(1-\beta_t\right)^2\E_t\left[\left\|g_t-\nabla f(x_{t})\right\|^2\right].
		\end{split}
	\end{equation}
	Using the fact that $\nabla f$ is $L$-Lipschitz continuous, we arrive at
	\begin{equation}
		\begin{split}
			\E_t\left[\left\|m_t-\nabla f(x_t)\right\|^2\right]
			\leq&\beta_t\left\|m_{t-1}-\nabla f(x_{t-1})\right\|^2\\&+\frac{\beta_t^2}{1-\beta_t}L^2\left\|x_{t}-x_{t-1}\right\|^2\\&+\left(1-\beta_t\right)^2\E_t\left[\left\|g_t-\nabla f(x_{t})\right\|^2\right].
		\end{split}
	\end{equation}
	This completes the proof.
\end{proof}

\begin{remark}
	By replacing $\beta_t$ with $1-\beta_t$, we obtain an equivalent form of Lemma \ref{le:VR} 
	\begin{equation}\label{wang}
		\begin{split}
			&\E_t\left[\left\|m_t-\nabla f(x_t)\right\|^2\right]
			\leq\left(1-\beta_t\right)\left\|m_{t-1}-\nabla f(x_{t-1})\right\|^2 \\&\qquad\qquad\qquad\quad +\frac{\left(1-\beta_t\right)^2}{\beta_t}L^2\left\|x_{t}-x_{t-1}\right\|^2+\beta_t^2\E_t\left[\left\|g_t-\nabla f(x_{t})\right\|^2\right].
		\end{split}
	\end{equation}
	It follows from $\beta_t\in[0,1)$ that our estimation \eqref{wang} is tighter than Lemma 2 in \cite{Wangmengdi}.
\end{remark}

\begin{lemma}\label{le:delta}
	Let $x_t$ be the iteration sequence generated by the UAdam algorithm. Suppose that Assumptions \ref{ass:smooth}, \ref{ass:unbiased}, \ref{ass:weak} and \ref{ass:bound} hold. Then, 
	\begin{equation}
		\begin{split}
			&\E\left[\sum_{t=1}^T\Delta_{t}\right]\leq\E\left[\frac{\Delta_{1}-\Delta_{T+1}}{1-\beta}\right]+2\left(1-\beta\right)D_1\E\left[\sum_{t=1}^T\left\|\nabla f(x_t)\right\|^2\right]\\&\qquad+\left(1-\beta\right)D_0T+\tilde{\lambda}\left(2\left(1-\beta\right)D_1L^2\eta_u^2+\frac{\beta^2L^2\eta_u^2}{(1-\beta)^2}\right)\E\left[\sum_{t=1}^T\left\|g_t\right\|^2\right]\\&\qquad+(1-\tilde{\lambda})\left(2\left(1-\beta\right)D_1L^2\eta_u^2+\frac{\beta^2L^2\eta_u^2}{(1-\beta)^2}\right)\E\left[\sum_{t=1}^T\left\|m_{t}\right\|^2\right].
		\end{split}
	\end{equation}
	where $\tilde{\lambda}=(1-\beta)\lambda\in[0,1]$, $m_t=\beta m_{t-1}+\left(1-\beta\right) g_t$, $0\leq\beta<1$, and $\Delta_t=\left\|m_t-\nabla f(x_t)\right\|^2$.
\end{lemma}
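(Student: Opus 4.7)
The plan is to specialize Lemma~\ref{le:VR} to the constant momentum $\beta_t=\beta$, use Assumption~\ref{ass:weak} to eliminate the stochastic residual, telescope the resulting variance recursion, and finally rewrite the displacement $\|x_t-x_{t-1}\|^2$ via the UAdam update and the convex structure of $\bar m_t=(1-\tilde\lambda)m_t+\tilde\lambda g_t$.

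First, instantiating Lemma~\ref{le:VR} at $\beta_t=\beta$ and applying Assumption~\ref{ass:weak} yields
\begin{equation*}
\E_t[\Delta_t]\leq \beta\Delta_{t-1}+\frac{\beta^2L^2}{1-\beta}\|x_t-x_{t-1}\|^2+(1-\beta)^2 D_0+(1-\beta)^2 D_1\|\nabla f(x_t)\|^2.
\end{equation*}
To produce the coefficient $2(1-\beta)D_1L^2\eta_u^2$ that ends up sitting in front of $\|g_t\|^2$ and $\|m_t\|^2$, I would then dominate $\|\nabla f(x_t)\|^2\leq 2\|\nabla f(x_{t-1})\|^2+2L^2\|x_t-x_{t-1}\|^2$ by Young's inequality and Assumption~\ref{ass:smooth}. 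This injects an extra $2(1-\beta)^2 D_1 L^2\|x_t-x_{t-1}\|^2$ into the recursion, which will later be bundled with the $\tfrac{\beta^2L^2}{1-\beta}\|x_t-x_{t-1}\|^2$ already present, while the $\|\nabla f(x_{t-1})\|^2$ term becomes $\sum_{t=1}^{T}\|\nabla f(x_t)\|^2$ after re-indexing.

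Next, taking total expectations and summing the recursion from $t=2$ up to $T+1$, the key manipulation is the standard EMA telescoping: rearranging $\E[\Delta_t]\leq\beta\E[\Delta_{t-1}]+(\text{remainder}_t)$ as $(1-\beta)\E[\Delta_{t-1}]\leq \E[\Delta_{t-1}]-\E[\Delta_t]+(\text{remainder}_t)$ causes the $\Delta$'s on the left to collapse into $\E[\Delta_1-\Delta_{T+1}]$. Dividing through by $1-\beta$ then produces the announced prefactor $\tfrac{\E[\Delta_1-\Delta_{T+1}]}{1-\beta}$, the constant $(1-\beta)D_0T$, and the coefficient $2(1-\beta)D_1$ in front of $\sum_{t=1}^{T}\E[\|\nabla f(x_t)\|^2]$.

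Finally, I would use the UAdam update $x_t-x_{t-1}=-\eta_{t-1}\bar m_{t-1}$ together with Assumption~\ref{ass:bound} to obtain $\|x_t-x_{t-1}\|^2\leq\eta_u^2\|\bar m_{t-1}\|^2$, and then exploit convexity of $\|\cdot\|^2$ (since $\tilde\lambda\in[0,1]$) to split $\|\bar m_{t-1}\|^2\leq(1-\tilde\lambda)\|m_{t-1}\|^2+\tilde\lambda\|g_{t-1}\|^2$. Combining the two sources of $\|x_t-x_{t-1}\|^2$—the variance recursion contributing $\tfrac{\beta^2L^2\eta_u^2}{(1-\beta)^2}$ and the Young split of $\|\nabla f(x_t)\|^2$ contributing $2(1-\beta)D_1L^2\eta_u^2$—and apportioning them with weights $\tilde\lambda$ and $1-\tilde\lambda$ recovers the stated inequality. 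I expect the main obstacle to be bookkeeping rather than any single estimate: in particular, recognizing that the factor $2(1-\beta)D_1$ attached to $\sum\|\nabla f(x_t)\|^2$ arises from the Young split and not directly from the WGC, and then cleanly absorbing the boundary contributions at $t=1$ and $t=T+1$ so that every sum in the final bound indexes over $t\in[1,T]$.
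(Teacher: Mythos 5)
Your proposal is correct and follows essentially the same route as the paper: apply Lemma~\ref{le:VR} with constant $\beta_t=\beta$, use the weak growth condition and the $L$-smoothness split $\|\nabla f(x_{t+1})\|^2\leq 2\|\nabla f(x_t)\|^2+2L^2\|x_{t+1}-x_t\|^2$, bound $\|x_{t+1}-x_t\|^2$ via the UAdam update and convexity of $\|\cdot\|^2$, then telescope and divide by $1-\beta$. The only cosmetic difference is that the paper writes the recursion in $\Delta_{t+1},\Delta_t$ while you shift indices to $\Delta_t,\Delta_{t-1}$; the coefficient bookkeeping (the factor $2$ from the Young split combining with $(1-\beta)$ from telescoping and $D_1$ from WGC, and the weights $\tilde\lambda,1-\tilde\lambda$ from the $\bar m_t$ split) is exactly as in the paper.
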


\begin{proof}
	Let $\Delta_t=\left\|m_t-\nabla f(x_t)\right\|^2$. Then, upon applying Lemma \ref{le:VR} with $\beta_t=\beta\in[0,1)$, and taking the total expectation, it follows that
	\begin{equation}\label{eq:4}
		\begin{split}
			\E\left[\Delta_{t+1}\right] \leq& \beta\E\left[\Delta_{t}\right]+\frac{\beta^2L^2}{1-\beta}\E\left[\left\|x_{t+1}-x_{t}\right\|^2\right]\\&+\left(1-\beta\right)^2\E\left[\left\|g_{t+1}-\nabla f(x_{t+1})\right\|^2\right].
		\end{split}
	\end{equation}
	Upon rearranging the terms in \eqref{eq:4}, we have
	\begin{equation}\label{eq:3}
		\begin{split}
			\E\left[\Delta_{t}\right]\leq&\frac{\E\left[\Delta_{t}-\Delta_{t+1}\right]}{1-\beta}+\frac{\beta^2L^2}{(1-\beta)^2}\E\left[\left\|x_{t+1}-x_{t}\right\|^2\right]\\&+(1-\beta)\E\left[\left\|g_{t+1}-\nabla f(x_{t+1})\right\|^2\right].\\
		\end{split}
	\end{equation}
	For the second term on the right-hand side of \eqref{eq:3}, according to the iteration, $x_{t+1} = x_{t}-\eta_t\bar{m}_t,\bar{m}_t=m_t-\tilde{\lambda} (m_t-g_t)$ in Algorithm \ref{UAdam}, we have
	\begin{equation}\label{eq:3-1}
		\begin{split}
			\left\|x_{t+1}-x_{t}\right\|^2&=\left\|\eta_t\bar{m}_t\right\|^2=\left\|\tilde{\lambda}\eta_t g_t+(1-\tilde{\lambda})\eta_t m_t\right\|^2 \\
			&\overset{(i)}{\leq} \tilde{\lambda}\|\eta_tg_t\|^2+(1-\tilde{\lambda})\|\eta_tm_{t}\|^2 \\
			&\overset{(ii)}{\leq} \tilde{\lambda}\eta_u^2\left\|g_t\right\|^2+(1-\tilde{\lambda})\eta_u^2\left\|m_{t}\right\|^2,
		\end{split}
	\end{equation}
	where ($i$) uses the convexity of $\|\cdot\|^2$ and ($ii$) follows from Assumption \ref{ass:bound}, $\eta_l\leq\eta_{t,i}\leq\eta_u$. 
	For the third term on the right-hand side of \eqref{eq:3}, according to Assumption \ref{ass:weak}, we have
	\begin{equation}\label{eq:3-2-1}
		\begin{split}
			\E\left[\left\|g_{t+1}-\nabla f(x_{t+1})\right\|^2\right] \leq D_0+D_1\E\left[\left\|\nabla f(x_{t+1})\right\|^2\right].
		\end{split}
	\end{equation}
	For the second term on the right-hand side of \eqref{eq:3-2-1}, since $\nabla f$ is $L$-Lipschitz continuous, we obtain
	\begin{equation}\label{eq:3-2-2}
		\begin{split}
			&\E\left[\left\|\nabla f(x_{t+1})\right\|^2\right] = \E\left[\left\|\nabla f(x_{t+1})-\nabla f(x_t)+\nabla f(x_t)\right\|^2\right]\\
			&\quad\,\leq2\E\left[\left\|\nabla f(x_{t+1})-\nabla f(x_t)\right\|^2\right]+2\E\left[\left\|\nabla f(x_t)\right\|^2\right]\\
			&\quad\,\leq2L^2\E\left[\left\|x_{t+1}-x_t\right\|^2\right]+2\E\left[\left\|\nabla f(x_t)\right\|^2\right]\\
			&\quad\overset{\eqref{eq:3-1}}{\leq}2\tilde{\lambda} L^2\eta_u^2\E\left[\left\|g_t\right\|^2\right]+2(1-\tilde{\lambda})L^2\eta_u^2\E\left[\left\|m_{t}\right\|^2\right]+2\E\left[\left\|\nabla f(x_t)\right\|^2\right].
		\end{split}
	\end{equation}
	Upon combining \eqref{eq:3-2-1} and \eqref{eq:3-2-2}, we obtain
	\begin{equation}\label{eq:3-2}
		\begin{split}
			&\E\left[\left\|g_{t+1}-\nabla f(x_{t+1})\right\|^2\right] \leq D_0+2D_1\E\left[\left\|\nabla f(x_t)\right\|^2\right]\\&\qquad\qquad\qquad\qquad+2\tilde{\lambda} D_1L^2\eta_u^2\E\left[\left\|g_t\right\|^2\right]+2(1-\tilde{\lambda})D_1L^2\eta_u^2\E\left[\left\|m_{t}\right\|^2\right].
		\end{split}
	\end{equation}
	A substitution of \eqref{eq:3-1} and \eqref{eq:3-2} into \eqref{eq:3} yields
	\begin{equation}
		\begin{split}
			\E\left[\Delta_{t}\right]\leq&\frac{\E\left[\Delta_{t}-\Delta_{t+1}\right]}{1-\beta}+\left(1-\beta\right)D_0+2\left(1-\beta\right)D_1\E\left[\left\|\nabla f(x_t)\right\|^2\right]\\&+\tilde{\lambda}\left(2\left(1-\beta\right)D_1L^2\eta_u^2+\frac{\beta^2L^2\eta_u^2}{(1-\beta)^2}\right)\E\left[\left\|g_t\right\|^2\right]\\&+(1-\tilde{\lambda})\left(2\left(1-\beta\right)D_1L^2\eta_u^2+\frac{\beta^2L^2\eta_u^2}{(1-\beta)^2}\right)\E\left[\left\|m_{t}\right\|^2\right].
		\end{split}
	\end{equation}
	Upon summing up the above inequality for all iterations $t\in[T]$, we obtain
	\begin{equation}
		\begin{split}
			&\E\left[\sum_{t=1}^T\Delta_{t}\right]\leq\E\left[\frac{\Delta_{1}-\Delta_{T+1}}{1-\beta}\right]+2\left(1-\beta\right)D_1\E\left[\sum_{t=1}^T\left\|\nabla f(x_t)\right\|^2\right]\\&\qquad+\left(1-\beta\right)D_0T+\tilde{\lambda}\left(2\left(1-\beta\right)D_1L^2\eta_u^2+\frac{\beta^2L^2\eta_u^2}{(1-\beta)^2}\right)\E\left[\sum_{t=1}^T\left\|g_t\right\|^2\right]\\&\qquad+(1-\tilde{\lambda})\left(2\left(1-\beta\right)D_1L^2\eta_u^2+\frac{\beta^2L^2\eta_u^2}{(1-\beta)^2}\right)\E\left[\sum_{t=1}^T\left\|m_{t}\right\|^2\right].
		\end{split}
	\end{equation}
	This completes the proof.
\end{proof}

\begin{lemma}\label{le:f}
	Let $x_t$ be the iteration sequence generated by the UAdam algorithm. Suppose that Assumption \ref{ass:smooth} is satisfied. Then, 
	\begin{equation}
		\begin{split}
			f(x_{t+1}) \leq&   f(x_t)+\frac{\tilde{\lambda}}{2}\left\|\sqrt{\eta_t}\left(g_{t}-\nabla f(x_t)\right)\right\|^2+\frac{1-\tilde{\lambda}}{2}\left\|\sqrt{\eta_t}\left(m_{t}-\nabla f(x_t)\right)\right\|^2\\&-\frac{1}{2}\left\|\sqrt{\eta_t}\nabla f(x_t)\right\|^2-\frac{\tilde{\lambda}}{2}\left\|\sqrt{\eta_t}g_{t}\right\|^2+\frac{\tilde{\lambda} L}{2}\left\|\eta_tg_t\right\|^{2}\\&-\frac{1-\tilde{\lambda}}{2}\left\|\sqrt{\eta_t}m_{t}\right\|^2+\frac{(1-\tilde{\lambda})L}{2}\left\|\eta_tm_{t}\right\|^{2},
		\end{split}
	\end{equation}
	where $\tilde{\lambda}=(1-\beta)\lambda\in[0,1]$ and $\beta\in[0,1)$. 
\end{lemma}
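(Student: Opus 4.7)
The plan is to combine the standard $L$-smoothness inequality with two elementary identities, namely the polarization identity for inner products and the convexity of $\|\cdot\|^2$ applied to the convex combination hidden inside $\bar{m}_t$. First I would invoke Assumption \ref{ass:smooth} to write
\begin{equation*}
f(x_{t+1}) \leq f(x_t) + \langle \nabla f(x_t), x_{t+1}-x_t\rangle + \frac{L}{2}\|x_{t+1}-x_t\|^2,
\end{equation*}
and then substitute $x_{t+1}-x_t = -\eta_t \bar{m}_t$, together with the decomposition $\bar{m}_t = \tilde{\lambda} g_t + (1-\tilde{\lambda})m_t$ coming from Algorithm \ref{UAdam}. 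This splits both the linear and quadratic contributions into a $g_t$-piece weighted by $\tilde{\lambda}$ and an $m_t$-piece weighted by $1-\tilde{\lambda}$.

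Next, for the linear term I would rewrite each inner product in the $\sqrt{\eta_t}$-norm, i.e.\ $\langle \nabla f(x_t), \eta_t y\rangle = \langle \sqrt{\eta_t}\nabla f(x_t), \sqrt{\eta_t} y\rangle$ for $y\in\{g_t,m_t\}$ (this is where the statement's $\sqrt{\eta_t}$ notation comes from, interpreting the operation coordinate-wise). Applying the polarization identity $-\langle a,b\rangle = \tfrac{1}{2}\|a-b\|^2 - \tfrac{1}{2}\|a\|^2 - \tfrac{1}{2}\|b\|^2$ separately with $b = \sqrt{\eta_t}g_t$ and $b=\sqrt{\eta_t}m_t$ will produce exactly the six terms
\begin{equation*}
\tfrac{\tilde{\lambda}}{2}\|\sqrt{\eta_t}(g_t-\nabla f(x_t))\|^2,\; \tfrac{1-\tilde{\lambda}}{2}\|\sqrt{\eta_t}(m_t-\nabla f(x_t))\|^2,\; -\tfrac{1}{2}\|\sqrt{\eta_t}\nabla f(x_t)\|^2,\; -\tfrac{\tilde{\lambda}}{2}\|\sqrt{\eta_t}g_t\|^2,\; -\tfrac{1-\tilde{\lambda}}{2}\|\sqrt{\eta_t}m_t\|^2,
\end{equation*}
after the coefficient $\tfrac12$ of $\|\sqrt{\eta_t}\nabla f(x_t)\|^2$ from the two expansions has been added, giving $-\tfrac12\|\sqrt{\eta_t}\nabla f(x_t)\|^2$ (since $\tilde{\lambda}+(1-\tilde{\lambda})=1$).

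For the quadratic term $\frac{L}{2}\|\eta_t\bar{m}_t\|^2$, the assumption $\tilde{\lambda}\in[0,1]$ lets me use convexity of $\|\cdot\|^2$ (exactly as in step (i) of the proof of Lemma \ref{le:delta}) to obtain
\begin{equation*}
\|\eta_t\bar{m}_t\|^2 = \|\tilde{\lambda}\,\eta_t g_t + (1-\tilde{\lambda})\,\eta_t m_t\|^2 \leq \tilde{\lambda}\|\eta_t g_t\|^2 + (1-\tilde{\lambda})\|\eta_t m_t\|^2,
\end{equation*}
which produces the remaining two terms $\tfrac{\tilde\lambda L}{2}\|\eta_t g_t\|^2$ and $\tfrac{(1-\tilde\lambda)L}{2}\|\eta_t m_t\|^2$ in the statement. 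Adding the linear and quadratic contributions delivers the claimed inequality.

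This argument is essentially formulaic; the only place that requires care is matching the bookkeeping so that the coefficients of every term come out exactly as written. In particular, one must be careful that the two polarization identities contribute $\tfrac{\tilde\lambda}{2}$ and $\tfrac{1-\tilde\lambda}{2}$ copies of $\|\sqrt{\eta_t}\nabla f(x_t)\|^2$, which sum to the single $\tfrac12\|\sqrt{\eta_t}\nabla f(x_t)\|^2$ term, and that the convexity bound is allowed to be invoked with the coefficients $\tilde\lambda$ and $1-\tilde\lambda$ (guaranteed by $\tilde\lambda=(1-\beta)\lambda\in[0,1]$ coming from $\lambda\in[0,1/(1-\beta)]$). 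No step involves stochastic expectations or the adaptive-learning-rate bounds, so Assumptions \ref{ass:unbiased}, \ref{ass:weak}, \ref{ass:bound} are not needed here; only $L$-smoothness is used, in line with the statement.
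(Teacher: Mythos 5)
Your argument is correct and mirrors the paper's proof exactly: both invoke the Descent Lemma, decompose $\bar m_t=\tilde\lambda g_t+(1-\tilde\lambda)m_t$, apply $-2\langle a,b\rangle=\|a-b\|^2-\|a\|^2-\|b\|^2$ to the two inner products with $a=\sqrt{\eta_t}\nabla f(x_t)$, and bound $\|\eta_t\bar m_t\|^2$ via convexity of $\|\cdot\|^2$. The bookkeeping you describe for the $-\tfrac12\|\sqrt{\eta_t}\nabla f(x_t)\|^2$ term is exactly what appears in the paper.
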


\begin{proof}
	Since the gradient $\nabla f$ is $L$-Lipschitz continuous, according to the Descent Lemma \cite{Nesterov}, and the iteration of the UAdam algorithm, $x_{t+1} = x_{t}-\eta_t\bar{m}_t,\bar{m}_t=m_t-\tilde{\lambda} (m_t-g_t)$, we have
	\begin{equation}
		\begin{split}
			f(x_{t+1}) \leq& f(x_t)+\langle \nabla f(x_t),x_{t+1}-x_t \rangle+\frac{L}{2}\left\|x_{t+1}-x_t\right\|^{2}\\
			=&f(x_t)-\langle \nabla f(x_t),\eta_t\bar{m}_t \rangle+\frac{L}{2}\left\|\eta_t\bar{m}_t\right\|^{2}\\
			=& f(x_t)-\left\langle \nabla f(x_t),\eta_t\left(m_t-\tilde{\lambda} (m_t-g_t)\right) \right\rangle\\&+\frac{L}{2}\left\|\eta_t\left(m_t-\tilde{\lambda} (m_t-g_t)\right)\right\|^{2}\\
			=& f(x_t)-\tilde{\lambda}\langle \nabla f(x_t),\eta_tg_t \rangle-(1-\tilde{\lambda})\langle \nabla f(x_t),\eta_tm_{t} \rangle\\&+\frac{L}{2}\left\|\tilde{\lambda}\eta_t g_t+(1-\tilde{\lambda})\eta_tm_{t}\right\|^{2}.
		\end{split}
	\end{equation}
	From the convexity of $\|\cdot\|^2$: $\|\tilde{\lambda} x+(1-\tilde{\lambda})y\|^2\leq\tilde{\lambda}\|x\|^2+(1-\tilde{\lambda})\|y\|^2$ and the fact that $-2\langle a,b\rangle=\|a-b\|^2-\|a\|^2-\|b\|^2$, we have
	\begin{equation}
		\begin{split}
			f(x_{t+1}) \leq&  f(x_t)+\frac{\tilde{\lambda}}{2}\left\|\sqrt{\eta_t}\left(g_{t}-\nabla f(x_t)\right)\right\|^2-\frac{\tilde{\lambda}}{2}\left\|\sqrt{\eta_t}\nabla f(x_t)\right\|^2-\frac{\tilde{\lambda}}{2}\left\|\sqrt{\eta_t}g_{t}\right\|^2\\&+\frac{1-\tilde{\lambda}}{2}\left\|\sqrt{\eta_t}\left(m_{t}-\nabla f(x_t)\right)\right\|^2-\frac{1-\tilde{\lambda}}{2}\left\|\sqrt{\eta_t}\nabla f(x_t)\right\|^2\\&-\frac{1-\tilde{\lambda}}{2}\left\|\sqrt{\eta_t}m_{t}\right\|^2+\frac{\tilde{\lambda} L}{2}\left\|\eta_tg_t\right\|^{2}+\frac{(1-\tilde{\lambda})L}{2}\left\|\eta_tm_{t}\right\|^{2}\\
			=&  f(x_t)+\frac{\tilde{\lambda}}{2}\left\|\sqrt{\eta_t}\left(g_{t}-\nabla f(x_t)\right)\right\|^2+\frac{1-\tilde{\lambda}}{2}\left\|\sqrt{\eta_t}\left(m_{t}-\nabla f(x_t)\right)\right\|^2\\&-\frac{1}{2}\left\|\sqrt{\eta_t}\nabla f(x_t)\right\|^2-\frac{\tilde{\lambda}}{2}\left\|\sqrt{\eta_t}g_{t}\right\|^2+\frac{\tilde{\lambda} L}{2}\left\|\eta_tg_t\right\|^{2}\\&-\frac{1-\tilde{\lambda}}{2}\left\|\sqrt{\eta_t}m_{t}\right\|^2+\frac{(1-\tilde{\lambda})L}{2}\left\|\eta_tm_{t}\right\|^{2}.
		\end{split}
	\end{equation}
	This completes the proof.
\end{proof}

\subsection{Convergence analysis of UAdam for non-convex optimization}

With the help of the lemmas in Section \ref{sec:lemma}, we now proceed to establish the convergence analysis of the UAdam algorithm.

\begin{theorem}\label{th:UAdam}
	Let $x_t$ be the iteration sequence generated by UAdam. Suppose that Assumptions \ref{ass:smooth}, \ref{ass:unbiased}, \ref{ass:weak} and \ref{ass:bound} are satisfied. With $0<1-\beta\leq\min\left\{\frac{\eta_l}{2(2+\lambda)D_1\eta_u},1\right\}$ and $\eta_u\leq\min\left\{\frac{\sqrt[3]{\eta_l}}{2\sqrt[3]{D_1L^2}}, \sqrt[3]{\frac{\left(1-\beta\right)^2\eta_l}{4L^2}},\sqrt{\frac{\eta_l}{2L}}\right\}$, we have
	\begin{equation}
		\begin{split}
			\frac{1}{T}\E\left[\sum_{t=1}^T\left\|\nabla f(x_t)\right\|^2\right]\leq\mathcal{O}\left(\frac{1}{T}\right)+\mathcal{O}\big((1-\beta)D_0\big).
		\end{split}
	\end{equation}
\end{theorem}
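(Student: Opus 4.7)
The plan is to combine Lemma \ref{le:f} (the one-step descent bound on $f(x_{t+1})$) with Lemma \ref{le:delta} (the telescoping variance bound on $\sum_t\Delta_t$), and then exploit Assumption \ref{ass:weak} together with the three hyperparameter constraints to absorb the unwanted $\|g_t\|^2$, $\|m_t\|^2$, and $\|\nabla f(x_t)\|^2$ contributions on the right-hand side into the negative terms already present.

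First, I sum Lemma \ref{le:f} from $t=1$ to $T$, take total expectation, and rearrange so that $\tfrac{1}{2}\sum_t\|\sqrt{\eta_t}\,\nabla f(x_t)\|^2$ stands alone on the left. The coordinate-wise bounds $\eta_l\le\eta_{t,i}\le\eta_u$ from Assumption \ref{ass:bound} let me lower-bound $\|\sqrt{\eta_t}\,y\|^2\ge\eta_l\|y\|^2$ and upper-bound $\|\eta_t y\|^2\le\eta_u^2\|y\|^2$; the assumption $\eta_u\le\sqrt{\eta_l/(2L)}$ then guarantees $\eta_l-L\eta_u^2\ge\eta_l/2$, so the coefficients attached to $-\sum_t\|g_t\|^2$ and $-\sum_t\|m_t\|^2$ remain genuinely negative and available as ``sinks''.

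Next, I bound the two variance terms that appear on the right. The $\|g_t-\nabla f(x_t)\|^2$ piece is controlled directly by Assumption \ref{ass:weak}, producing a summand $\tfrac{\tilde\lambda\eta_u}{2}\bigl(D_0T+D_1\,\E\sum_t\|\nabla f(x_t)\|^2\bigr)$, and the $\|m_t-\nabla f(x_t)\|^2=\Delta_t$ piece is replaced by the bound from Lemma \ref{le:delta}. Substituting the latter reintroduces exactly the problematic $\E\sum_t\|g_t\|^2$ and $\E\sum_t\|m_t\|^2$ terms, but with coefficient $\tfrac{(1-\tilde\lambda)\eta_u}{2}\bigl(2(1-\beta)D_1L^2\eta_u^2+\beta^2L^2\eta_u^2/(1-\beta)^2\bigr)$. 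The two conditions $\eta_u\le\sqrt[3]{\eta_l/(8D_1L^2)}$ and $\eta_u\le\sqrt[3]{(1-\beta)^2\eta_l/(4L^2)}$ are designed so each of these $\eta_u^3$ factors is at most $\eta_l/4$, making the combined coefficient no larger than the matching sinks $\tfrac{(1-\tilde\lambda)(\eta_l-L\eta_u^2)}{2}$ and $\tfrac{\tilde\lambda(\eta_l-L\eta_u^2)}{2}$; these rows of the ledger therefore cancel.

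Finally I collect the coefficients of $\E\sum_t\|\nabla f(x_t)\|^2$ on the right-hand side, namely $\tfrac{\tilde\lambda\eta_u D_1}{2}+(1-\tilde\lambda)(1-\beta)D_1\eta_u$; substituting $\tilde\lambda=(1-\beta)\lambda$ upper-bounds this by $\tfrac{(2+\lambda)(1-\beta)D_1\eta_u}{2}$, and the assumption $1-\beta\le\eta_l/(2(2+\lambda)D_1\eta_u)$ shows it is at most $\eta_l/4$. Subtracting it from the leading $\eta_l/2$ on the left retains $\tfrac{\eta_l}{4}\,\E\sum_t\|\nabla f(x_t)\|^2$. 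What remains on the right collapses into a constant piece of size $\mathcal{O}(1)$ coming from $f(x_1)-f_\ast$ and $\Delta_1/(1-\beta)$, plus a piece of size $\mathcal{O}((1-\beta)D_0T)$ coming from the $D_0$ additive-noise contributions of both Assumption \ref{ass:weak} and Lemma \ref{le:delta} (since $\tilde\lambda+(1-\tilde\lambda)(1-\beta)\le(1+\lambda)(1-\beta)$). Dividing by $\eta_l T/4$ yields the stated bound. The main obstacle I anticipate is the careful accounting of the $\tilde\lambda=(1-\beta)\lambda$ factors as they propagate through both lemmas, and verifying that each of the three hyperparameter constraints is precisely what is needed to neutralise its corresponding cross-term without introducing hidden $T$-dependence.
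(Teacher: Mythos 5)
Your proposal follows essentially the same route as the paper's proof: sum Lemma \ref{le:f}, use Assumption \ref{ass:bound} and $\eta_u^2\le\eta_l/(2L)$ to keep the $\|g_t\|^2$ and $\|m_t\|^2$ sinks negative, substitute Lemma \ref{le:delta} and Assumption \ref{ass:weak}, show the $\eta_u^3$ conditions cancel the reintroduced cross-terms, bound the $\|\nabla f(x_t)\|^2$ coefficient by $\eta_l/4$ via the $1-\beta$ condition, and divide by $\eta_l T/4$. One small imprecision: the two cubic constraints make each of the two contributions inside the paper's $\Psi$ at most $\eta_l/8$ (not $\eta_l/4$), so that their sum is $\le\eta_l/4$ and $\Psi\le 0$; your phrasing "each ... is at most $\eta_l/4$" would give a combined bound of $\eta_l/2$, which would not cancel the $-\eta_l/4$ sink, though your conclusion and the role you assign each constraint are otherwise correct.
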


\begin{proof}
	According to Lemma \ref{le:f}, using Assumption \ref{ass:bound}, $\eta_l\leq\eta_{t,i}\leq\eta_u$, and $\eta_u^2\leq\eta_l/(2L)$, we have
	\begin{equation}\label{eq:f}
		\begin{split}
			f(x_{t+1}) \leq& f(x_t)+\frac{\tilde{\lambda}\eta_u}{2}\left\|g_{t}-\nabla f(x_t)\right\|^2+\frac{(1-\tilde{\lambda})\eta_u}{2}\left\|m_{t}-\nabla f(x_t)\right\|^2\\&-\frac{\eta_l}{2}\left\|\nabla f(x_t)\right\|^2+\frac{\tilde{\lambda}(L\eta_u^2-\eta_l)}{2}\left\|g_t\right\|^{2}+\frac{(1-\tilde{\lambda})(L\eta_u^2-\eta_l)}{2}\left\|m_{t}\right\|^2\\
			\leq& f(x_t)+\frac{\tilde{\lambda}\eta_u}{2}\left\|g_{t}-\nabla f(x_t)\right\|^2+\frac{(1-\tilde{\lambda})\eta_u}{2}\left\|m_{t}-\nabla f(x_t)\right\|^2\\&-\frac{\eta_l}{2}\left\|\nabla f(x_t)\right\|^2-\frac{\tilde{\lambda}\eta_l}{4}\left\|g_t\right\|^{2}-\frac{(1-\tilde{\lambda})\eta_l}{4}\left\|m_{t}\right\|^2.
		\end{split}
	\end{equation}
	Upon rearranging the terms in \eqref{eq:f}, summing over $t\in[T]$, and taking the total expectation, we obtain
	\begin{equation}\label{eq:sumnf}
		\begin{split}
			\frac{\eta_l}{2}\E\left[\sum_{t=1}^T\left\|\nabla f(x_t)\right\|^2\right] &\leq f(x_1)-f_\ast+\frac{\tilde{\lambda}\eta_u}{2}\E\left[\sum_{t=1}^T\left\|g_{t}-\nabla f(x_t)\right\|^2\right]\\&+\frac{(1-\tilde{\lambda})\eta_u}{2}\E\left[\sum_{t=1}^T\left\|m_{t}-\nabla f(x_t)\right\|^2\right]\\&-\frac{\tilde{\lambda}\eta_l}{4}\E\left[\sum_{t=1}^T\left\|g_t\right\|^{2}\right]-\frac{(1-\tilde{\lambda})\eta_l}{4}\E\left[\sum_{t=1}^T\left\|m_{t}\right\|^2\right] , 
		\end{split}
	\end{equation}
	where $f_\ast$ is the lower bound of $f$ by Assumption \ref{ass:smooth}. Let $\Delta_t=\left\|m_t-\nabla f(x_t)\right\|^2$. Then, according to Lemma \ref{le:delta}, we have
	\begin{equation}\label{eq:delta}
		\begin{split}
			\E\left[\sum_{t=1}^T\Delta_{t}\right]\leq &\E\left[\frac{\Delta_{1}}{1-\beta}\right]+\left(1-\beta\right)D_0T+2\left(1-\beta\right)D_1\E\left[\sum_{t=1}^T\left\|\nabla f(x_t)\right\|^2\right]\\ &+\tilde{\lambda}\left(2\left(1-\beta\right)D_1L^2\eta_u^2+\frac{\beta^2L^2\eta_u^2}{(1-\beta)^2}\right)\E\left[\sum_{t=1}^T\left\|g_t\right\|^2\right]\\ &+(1-\tilde{\lambda})\left(2\left(1-\beta\right)D_1L^2\eta_u^2+\frac{\beta^2L^2\eta_u^2}{(1-\beta)^2}\right)\E\left[\sum_{t=1}^T\left\|m_{t}\right\|^2\right].
		\end{split}
	\end{equation}
	A substitution of \eqref{eq:delta} into \eqref{eq:sumnf} yields
	\begin{equation}\label{eq:sumnf1}
		\begin{split}
			&\frac{\eta_l}{2}\E\left[\sum_{t=1}^T\left\|\nabla f(x_t)\right\|^2\right]\leq f(x_1)-f_\ast+\frac{(1-\tilde{\lambda})\eta_u\Delta_{1}}{2(1-\beta)}+\frac{(1-\tilde{\lambda})\left(1-\beta\right)D_0\eta_u}{2}T\\
			&\;+(1-\tilde{\lambda})\left(1-\beta\right)D_1\eta_u\E\left[\sum_{t=1}^T\left\|\nabla f(x_t)\right\|^2\right]+\frac{\tilde{\lambda}\eta_u}{2}\E\left[\sum_{t=1}^T\left\|g_{t}-\nabla f(x_t)\right\|^2\right]\\
			&\;+\tilde{\lambda}\left((1-\tilde{\lambda})\left(1-\beta\right)D_1L^2\eta_u^3+\frac{(1-\tilde{\lambda})\beta^2 L^2\eta_u^3}{2\left(1-\beta\right)^2}-\frac{\eta_l}{4}\right)\E\left[\sum_{t=1}^T\left\|g_t\right\|^2\right]\\
			&\;+(1-\tilde{\lambda})\left((1-\tilde{\lambda})\left(1-\beta\right)D_1L^2\eta_u^3+\frac{(1-\tilde{\lambda})\beta^2L^2\eta_u^3}{2\left(1-\beta\right)^2}-\frac{\eta_l}{4}\right)\E\left[\sum_{t=1}^T\left\|m_{t}\right\|^2\right].
		\end{split}
	\end{equation}
	Denote $\Psi=(1-\tilde{\lambda})\left(1-\beta\right)D_1L^2\eta_u^3+\dfrac{(1-\tilde{\lambda})\beta^2L^2\eta_u^3}{2\left(1-\beta\right)^2}-\dfrac{\eta_l}{4}$. Then, the inequality \eqref{eq:sumnf1} can be rearranged as
	\begin{equation}
		\begin{split}
			&\frac{\eta_l}{2}\E\left[\sum_{t=1}^T\left\|\nabla f(x_t)\right\|^2\right]\leq f(x_1)-f_\ast+\frac{(1-\tilde{\lambda})\eta_u\Delta_{1}}{2\left(1-\beta\right)}+\frac{(1-\tilde{\lambda})\left(1-\beta\right)D_0\eta_u}{2}T\\
			&\;\quad+(1-\tilde{\lambda})\left(1-\beta\right)D_1\eta_u\E\left[\sum_{t=1}^T\left\|\nabla f(x_t)\right\|^2\right]+\frac{\tilde{\lambda}\eta_u}{2}\E\left[\sum_{t=1}^T\left\|g_{t}-\nabla f(x_t)\right\|^2\right]\\
			&\;\quad+\tilde{\lambda}\Psi\E\left[\sum_{t=1}^T\left\|g_t\right\|^2\right]+(1-\tilde{\lambda})\Psi\E\left[\sum_{t=1}^T\left\|m_{t}\right\|^2\right].
		\end{split}
	\end{equation}
	By Assumption \ref{ass:weak}, $\E_t\left[\left\|g_t-\nabla f(x_t)\right\|^2\right]\leq D_0+D_1\left\|\nabla f(x_t)\right\|^2$, we have
	\begin{equation}\label{eq:sumnf2}
		\begin{split}
			&\frac{\eta_l}{2}\E\left[\sum_{t=1}^T\left\|\nabla f(x_t)\right\|^2\right]\leq f(x_1)-f_\ast+\left(\frac{(1-\tilde{\lambda})\left(1-\beta\right)D_0\eta_u}{2}+\frac{\tilde{\lambda} D_0\eta_u}{2}\right)T\\&\quad~+\frac{(1-\tilde{\lambda})\eta_u \Delta_{1}}{2\left(1-\beta\right)}+\left((1-\tilde{\lambda})\left(1-\beta\right)D_1\eta_u+\frac{\tilde{\lambda} D_1\eta_u}{2}\right)\E\left[\sum_{t=1}^T\left\|\nabla f(x_t)\right\|^2\right]\\&\quad~+\tilde{\lambda}\Psi\E\left[\sum_{t=1}^T\left\|g_t\right\|^2\right]+(1-\tilde{\lambda})\Psi\E\left[\sum_{t=1}^T\left\|m_{t}\right\|^2\right].
		\end{split}
	\end{equation}
	Since
	\begin{equation}
		\tilde{\lambda}\in[0,1],\quad\beta\in[0,1),\quad\eta_u^3\leq\min\left\{\frac{\eta_l}{8D_1L^2}, \frac{\left(1-\beta\right)^2\eta_l}{4L^2}\right\},
	\end{equation}
	we obtain
	\begin{equation}\label{eq:01}
		(1-\tilde{\lambda})\left(1-\beta\right)D_1L^2\eta_u^3 \leq D_1L^2\eta_u^3 \leq D_1L^2  \frac{\eta_l}{8D_1L^2} = \frac{\eta_l}{8},
	\end{equation}
	\begin{equation}\label{eq:02}
		\frac{(1-\tilde{\lambda})\beta^2L^2\eta_u^3}{2\left(1-\beta\right)^2} \leq \frac{L^2\eta_u^3}{2\left(1-\beta\right)^2} \leq \frac{L^2}{2\left(1-\beta\right)^2} \frac{\left(1-\beta\right)^2\eta_l}{4L^2} = \frac{\eta_l}{8}.
	\end{equation}
	Upon combining \eqref{eq:01} and \eqref{eq:02}, it is straightforward to see that
	\begin{equation}
		\begin{split}
			\Psi&=(1-\tilde{\lambda})\left(1-\beta\right)D_1L^2\eta_u^3+\frac{(1-\tilde{\lambda})\beta^2L^2\eta_u^3}{2\left(1-\beta\right)^2}-\frac{\eta_l}{4} \\& \leq\frac{\eta_l}{8}+\frac{\eta_l}{8}-\frac{\eta_l}{4}=0.
		\end{split}
	\end{equation}
	In this way, \eqref{eq:sumnf2} reduces to
	\begin{equation}\label{eq:sumnf3}
		\begin{split}
			&\frac{\eta_l}{2}\E\left[\sum_{t=1}^T\left\|\nabla f(x_t)\right\|^2\right]\leq f(x_1)-f_\ast\\&\qquad\qquad\quad+\left(\frac{(1-\tilde{\lambda})\left(1-\beta\right)D_0\eta_u}{2}+\frac{\tilde{\lambda} D_0\eta_u}{2}\right)T  +\frac{(1-\tilde{\lambda})\eta_u \Delta_{1}}{2\left(1-\beta\right)}\\&\qquad\qquad\quad+\left((1-\tilde{\lambda})\left(1-\beta\right)D_1\eta_u+\frac{\tilde{\lambda} D_1\eta_u}{2}\right)\E\left[\sum_{t=1}^T\left\|\nabla f(x_t)\right\|^2\right].
		\end{split}
	\end{equation}
	Since 
	\begin{equation}
		\tilde{\lambda}=(1-\beta)\lambda\in[0,1],\quad 1-\beta\leq\min\left\{\frac{\eta_l}{2(2+\lambda) D_1\eta_u},1\right\},
	\end{equation}
	we have
	\begin{equation}\label{eq:11}
		\begin{split}
			(1-\tilde{\lambda})\left(1-\beta\right)D_1\eta_u+\frac{\tilde{\lambda} D_1\eta_u}{2}  &\leq \frac{(1-\beta)(2+\lambda) D_1\eta_u}{2} \\& \leq \frac{(2+\lambda) D_1\eta_u}{2}  \frac{\eta_l}{2(2+\lambda)D_1\eta_u} =\frac{\eta_l}{4},
		\end{split}
	\end{equation}
	and 
	\begin{equation}\label{eq:12}
		\begin{split}
			\frac{(1-\tilde{\lambda})\left(1-\beta\right)D_0\eta_u}{2}+\frac{\tilde{\lambda} D_0\eta_u}{2} &\leq \frac{D_0\eta_u}{2}\left(\left(1-\beta\right)+\tilde{\lambda}\right) \\& = \frac{\left(1-\beta\right)\left(1+\lambda\right)D_0\eta_u}{2}.
		\end{split}
	\end{equation}
	Finally, after plugging \eqref{eq:11} and \eqref{eq:12} back into \eqref{eq:sumnf3}, and rearranging the terms, we obtain
	\begin{equation}\label{eq:sumnf4}
		\begin{split}
			\frac{\eta_l}{4}\E\left[\sum_{t=1}^T\left\|\nabla f(x_t)\right\|^2\right]\leq& f(x_1)-f_\ast+\frac{(1-\tilde{\lambda})\eta_u \Delta_{1}}{2\left(1-\beta\right)}\\& +\frac{\left(1-\beta\right)\left(1+\lambda\right)D_0\eta_u}{2} T.
		\end{split}
	\end{equation}
	A multiplication of both sides of \eqref{eq:sumnf4} by $\dfrac{4}{\eta_l T}$ gives
	\begin{equation}
		\begin{split}
			\E\left[\frac{1}{T}\sum_{t=1}^T\left\|\nabla f(x_t)\right\|^2\right]\leq& \frac{4\left(f(x_1)-f_\ast\right)}{\eta_l T}+\frac{2(1-\tilde{\lambda})\eta_u \Delta_{1}}{\left(1-\beta\right)\eta_l T}\\&+\frac{2\left(1-\beta\right)\left(1+\lambda\right)D_0\eta_u}{\eta_l}.
		\end{split}
	\end{equation}
	This completes the proof.
\end{proof}

From Theorem \ref{th:UAdam}, observe that UAdam converges to the neighborhood of stationary point and the size of neighborhood decreases as $\beta$ increases. In particular, when the strong growth condition ($D_0=0$) and $\eta_u=K\eta_l$ $(K>1)$ are satisfied, we obtain the following corollary.

\begin{corollary}\label{co:k}
	Suppose that the conditions in Theorem \ref{th:UAdam} hold for UAdam (see Algorithm \ref{UAdam}). With $\eta_u\leq\min\left\{\frac{1}{2L\sqrt{2KD_1}}, \frac{1-\beta}{2L\sqrt{K}},\frac{1}{2KL}\right\}$ and $0<1-\beta\leq\min\left\{\frac{1}{2(2+\lambda) D_1K},1\right\}$, we have
	\begin{equation}
		\begin{split}
			\frac{1}{T}\E\left[\sum_{t=1}^T\left\|\nabla f(x_t)\right\|^2\right]\leq\mathcal{O}\left(\frac{1}{T}\right).
		\end{split}
	\end{equation}
\end{corollary}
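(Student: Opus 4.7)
The plan is to derive the corollary as a direct specialization of Theorem \ref{th:UAdam} under the two additional ingredients: the strong growth condition $D_0=0$ and the geometric relation $\eta_u = K\eta_l$ with $K>1$. Under these, the $\mathcal{O}((1-\beta)D_0)$ term in the theorem's bound vanishes and only the $\mathcal{O}(1/T)$ term survives, so the whole task reduces to checking that the hypotheses of the corollary are sufficient to invoke Theorem \ref{th:UAdam}.

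First I would rewrite the stepsize conditions of Theorem \ref{th:UAdam} after the substitution $\eta_l = \eta_u/K$, and verify that each of the three bounds on $\eta_u$ translates exactly into one of the three bounds listed in the corollary. Concretely, $\eta_u \leq \sqrt[3]{\eta_l/(8D_1 L^2)}$ becomes $\eta_u^{2/3} \leq 1/(2 K^{1/3} (D_1 L^2)^{1/3})$, which cubing and taking the square root gives $\eta_u \leq 1/(2L\sqrt{2KD_1})$; the condition $\eta_u \leq \sqrt[3]{(1-\beta)^2\eta_l/(4L^2)}$ becomes $\eta_u^2 \leq (1-\beta)^2/(4KL^2)$, i.e.\ $\eta_u \leq (1-\beta)/(2L\sqrt{K})$; and $\eta_u \leq \sqrt{\eta_l/(2L)}$ becomes $\eta_u \leq 1/(2KL)$. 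Similarly the condition $1-\beta \leq \eta_l/(2(2+\lambda)D_1 \eta_u)$ simplifies to $1-\beta \leq 1/(2(2+\lambda)D_1 K)$, exactly as stated. Thus the corollary's hypotheses are equivalent to those of Theorem \ref{th:UAdam} in the regime $\eta_u = K\eta_l$.

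Given this, I would invoke Theorem \ref{th:UAdam} to obtain
\begin{equation*}
\frac{1}{T}\E\!\left[\sum_{t=1}^T \|\nabla f(x_t)\|^2\right] \leq \mathcal{O}\!\left(\frac{1}{T}\right) + \mathcal{O}\!\big((1-\beta)D_0\big),
\end{equation*}
and then set $D_0 = 0$ to kill the second term, leaving only the $\mathcal{O}(1/T)$ contribution. If one wants to trace the explicit constants rather than the $\mathcal{O}$-notation, one would go back to inequality \eqref{eq:sumnf4} in the proof of Theorem \ref{th:UAdam}, substitute $D_0 = 0$ to drop the $T$-proportional term, divide by $\eta_l T/4$, and replace $\eta_u = K\eta_l$ in the remaining factor $(1-\tilde\lambda)\eta_u/(1-\beta)$; this yields an explicit $\mathcal{O}(1/T)$ bound with constants depending on $K$, $\lambda$, $\beta$, $\Delta_1$ and $f(x_1)-f_\ast$.

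I do not foresee a genuine obstacle here—the corollary is essentially a bookkeeping result. The only point requiring care is the algebraic translation of the three stepsize conditions, where the cube-root and square-root expressions must be simplified correctly; any arithmetic slip there would break the claimed equivalence between the corollary's and the theorem's hypotheses. Once that verification is done, the $\mathcal{O}(1/T)$ conclusion is immediate from the vanishing of the $D_0$-term.
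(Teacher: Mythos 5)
Your proposal is correct and follows essentially the same route as the paper: substitute $\eta_u = K\eta_l$ into the three step-size conditions and the momentum condition of Theorem~\ref{th:UAdam} to recover exactly the bounds stated in the corollary, then invoke the theorem with $D_0=0$ (strong growth condition) so that the $\mathcal{O}\big((1-\beta)D_0\big)$ term vanishes. Your algebraic simplifications of the cube-root and square-root conditions all check out and match the paper's displayed equivalences.
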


\begin{proof}
	From Theorem \ref{th:UAdam}, since $\eta_u=K\eta_l,K>1$, then for $\eta_u$, we have
	\begin{equation}
		\begin{split}
			&\eta_u^3\leq\frac{\eta_l}{8D_1L^2}=\frac{\eta_u}{8KD_1L^2} \quad\Leftrightarrow\quad \eta_u\leq\frac{1}{2L\sqrt{2KD_1}}, \\
			&\eta_u^3\leq\frac{\left(1-\beta\right)^2\eta_l}{4L^2}=\frac{\left(1-\beta\right)^2\eta_u}{4KL^2} \quad\Leftrightarrow\quad \eta_u\leq\frac{1-\beta}{2L\sqrt{K}}, \\
			&\eta_u^2\leq\frac{\eta_l}{2L}=\frac{\eta_u}{2KL} \quad\Leftrightarrow\quad \eta_u\leq\frac{1}{2KL}.
		\end{split}
	\end{equation}
	Furthermore, since $\frac{\eta_l}{\eta_u}=\frac{1}{K}$, the range of $\beta$ becomes
	\begin{equation}
		0<1-\beta\leq\min\left\{\frac{1}{2(2+\lambda) D_1K},1\right\}.
	\end{equation}
	This completes the proof.
\end{proof}

\begin{remark}
	From Algorithm \ref{UAdam}, we can see that when $\lambda=0$, UAdam degenerates into an Adam-type algorithm, while when $\lambda=1$, UAdam reduces to an NAdam-type algorithm. Then, from Corollary \ref{co:k}, we can directly obtain the convergence for the Adam-type and NAdam-type algorithms. This demonstrates the power and generality of Corollary \ref{co:k}, which allows us to immediately obtain the convergence results of many popular deep learning algorithms, such as AMSGrad, AdaBound, AdaFom, and Adan, to mention but a few. This is consistent with the convergence results in \cite{guo,Adan}. Last but not least, we can obtain a faster convergence rate than existing convergence results in \cite{Chen,1,Adam-without-modification,zou}, which is attributed the setting of Assumption \ref{ass:bound}.
\end{remark}

\section{Conclusion}\label{sec:conclusion}

We have proposed a novel unified framework for the design and analysis of adaptive momentum optimizers in deep learning. This unifying platform, referred to as the unified Adam (UAdam), combines unified momentum methods, including SHB and SNAG, with a class of adaptive learning rate algorithms satisfying a boundedness condition. By using the variance recursion of the stochastic gradient moving average estimator, we have established that UAdam can converge to the neighborhood of stationary points with the rate of $\mathcal{O}(1/T)$ in smooth non-convex settings and that the size of neighborhood decreases as $\beta$ increases. Under an extra condition (strong growth condition), we have further obtained that Adam converges to stationary points. These results have implied that, for a given problem in hand, with appropriate hyperparameter selection Adam can converge without any modification on its update rules. In addition, our analysis of UAdam does not impose any restrictions on the second-order moment parameter, $\beta_2$, and only requires a sufficiently large first-order momentum parameter (close to 1), which is in line with the hyperparameter settings in practice. The analysis has provided new insights into the convergence of Adam and NAdam, and a unifying platform for the development of new algorithms in this setting. Future work will investigate the convergence of Adam under biased gradient conditions.

\begin{appendices}

	\section{Equivalence form of SNAG}\label{app:NAG}
	
	\begin{proposition}\label{pro:NAG}
		Let $\bar{x}_t$ and $m_t$ denote respectively the iteration and momentum of the original SNAG, the update of which is given by
		\begin{equation}\label{NAG1}
			\text{SNAG}_1\text{: }
			\left\{
			\begin{split}
				&\bar{m}_{t}=\beta \bar{m}_{t-1}-\alpha\nabla f\left(\bar{x}_t+\beta \bar{m}_{t-1},\xi_t\right)\\
				&\bar{x}_{t+1}=\bar{x}_t+\bar{m}_t
			\end{split}
			\right..
		\end{equation}
		Then, when $\alpha=\eta(1-\beta)$ and $\bar{m}_t=-\eta m_t$, SNAG$_1$ is equivalent to
		\begin{equation}\label{NAG2}
			\text{SNAG}_2\text{: }
			\left\{
			\begin{split}
				&m_t=\beta m_{t-1}+\left(1-\beta\right) g_t \\
				&x_{t+1} = x_{t}-\eta\beta m_t-\eta\left(1-\beta\right) g_t
			\end{split}
			\right..
		\end{equation}
		
	\end{proposition}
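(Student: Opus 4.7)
The plan is to treat the equivalence as a change of variables, identifying the natural correspondence between the two parameterizations. The key observation is that SNAG$_1$ evaluates the gradient at the ``lookahead'' point $\bar{x}_t+\beta\bar{m}_{t-1}$, whereas SNAG$_2$ evaluates it at $x_t$; this suggests introducing the new iterate
\begin{equation*}
    x_t := \bar{x}_t + \beta\bar{m}_{t-1},
\end{equation*}
with the convention $\bar{m}_0=0$, so that $x_1=\bar{x}_1$ and $g_t=\nabla f(x_t,\xi_t)=\nabla f(\bar{x}_t+\beta\bar{m}_{t-1},\xi_t)$ automatically matches the stochastic gradient appearing in SNAG$_1$.

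First, I would derive the momentum recursion. Substituting $\bar{m}_t=-\eta m_t$, $\bar{m}_{t-1}=-\eta m_{t-1}$, and $\alpha=\eta(1-\beta)$ into the first line of SNAG$_1$ gives $-\eta m_t=-\eta\beta m_{t-1}-\eta(1-\beta)g_t$, and dividing by $-\eta$ yields $m_t=\beta m_{t-1}+(1-\beta)g_t$, which is exactly the first line of SNAG$_2$.

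Next, I would derive the iterate update. From the definition of $x_t$ and from $\bar{x}_{t+1}=\bar{x}_t+\bar{m}_t$, one obtains
\begin{equation*}
    x_{t+1} = \bar{x}_{t+1}+\beta\bar{m}_t = \bar{x}_t+(1+\beta)\bar{m}_t = x_t - \beta\bar{m}_{t-1}+(1+\beta)\bar{m}_t.
\end{equation*}
Substituting $\bar{m}_t=-\eta m_t$ and $\bar{m}_{t-1}=-\eta m_{t-1}$, this becomes $x_{t+1}=x_t+\eta\beta m_{t-1}-\eta(1+\beta)m_t$. Finally, using the momentum recursion $m_t=\beta m_{t-1}+(1-\beta)g_t$ to eliminate $\beta m_{t-1}=m_t-(1-\beta)g_t$, the right-hand side collapses to $x_t-\eta\beta m_t-\eta(1-\beta)g_t$, which is exactly the second line of SNAG$_2$.

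There is no real obstacle here: the proposition is a purely algebraic bookkeeping result and the only subtlety is making the right change of variables at the outset, namely $x_t=\bar{x}_t+\beta\bar{m}_{t-1}$, so that the gradient query points in the two schemes coincide. Once this identification is made, both lines of SNAG$_2$ follow by direct substitution and one application of the momentum recursion, and the implication can be reversed step by step to give full equivalence.
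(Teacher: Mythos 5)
Your proof is correct and follows essentially the same route as the paper: the change of variables $x_t=\bar{x}_t+\beta\bar{m}_{t-1}$, the substitutions $\alpha=\eta(1-\beta)$ and $\bar{m}_t=-\eta m_t$ into the momentum line, and the elimination of $\beta m_{t-1}$ via $\beta m_{t-1}=m_t-(1-\beta)g_t$ in the iterate update all mirror the paper's argument step for step. The only cosmetic addition in your write-up is the explicit note on the initialization $\bar{m}_0=0$ and the remark that the substitutions are reversible, neither of which changes the substance.
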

	
	\begin{proof}
		Define $x_t=\bar{x}_t+\beta \bar{m}_{t-1}$ and $g_t=\nabla f\left(x_t,\xi_t\right)$. Then, the first identity in \eqref{NAG1} becomes
		\begin{equation}\label{eq:a1}
			\bar{m}_{t}=\beta \bar{m}_{t-1}-\alpha\nabla f\left(x_t,\xi_t\right)=\beta \bar{m}_{t-1}-\alpha g_t.
		\end{equation}
		Since $\alpha=\eta(1-\beta)$ and $\bar{m}_t=-\eta m_t$, then \eqref{eq:a1} becomes
		\begin{equation}\label{eq:a2}
			\begin{split}
				m_t = \frac{-\beta \bar{m}_{t-1}+\alpha g_t}{\eta}=\frac{\beta \eta m_{t-1}+\eta(1-\beta) g_t}{\eta}=\beta m_{t-1}+(1-\beta)g_t.
			\end{split}
		\end{equation}
		Recalling that $x_t=\bar{x}_t+\beta \bar{m}_{t-1}$, we obtain
		\begin{equation}
			\begin{split}
				x_{t+1}-x_{t}& \;=\,\,\bar{x}_{t+1}+\beta \bar{m}_{t}-\bar{x}_t-\beta \bar{m}_{t-1}\\
				&\overset{\eqref{NAG1}}{=}\bar{m}_t+\beta \bar{m}_{t}-\beta \bar{m}_{t-1}\\
				& \;=\,\, -\eta m_t-\eta\beta m_{t}+\eta\beta m_{t-1}\\
				& \overset{\eqref{eq:a2}}{=}\, -\eta m_t-\eta\beta m_{t}+\eta\left(m_t-\left(1-\beta\right) g_t\right)\\
				& \;=\,\, -\eta\beta m_{t}-\eta\left(1-\beta\right) g_t,
			\end{split}
		\end{equation}
		where the third equality follows from $\bar{m}_t=-\eta m_t$. 
		Therefore, the final equivalence form of the original SNAG$_1$ becomes
		\begin{equation}
			\left\{
			\begin{split}
				&m_t=\beta m_{t-1}+\left(1-\beta\right) g_t\\
				&x_{t+1} = x_{t}-\eta\beta m_{t}-\eta\left(1-\beta\right) g_t
			\end{split}
			\right..
		\end{equation}
		This completes the proof.
	\end{proof}

	\begin{proposition}\label{pro:NME}
		Xie $et$ $al$. \cite{Adan} proposed a Nesterov momentum estimation (NME) method as follows
		\begin{equation}\label{NME}
			\text{NME: }
			\left\{
			\begin{split}
				&\bar{m}_t=\beta \bar{m}_{t-1}+(1-\beta)\left(g_t+\beta\left(g_t-g_{t-1}\right)\right) \\
				&x_{t+1}=x_t-\eta \bar{m}_t
			\end{split}
			\right..
		\end{equation}
		Then, NME is equivalent to 
		\begin{equation}\label{SNAG}
			\text{SNAG: }
			\left\{
			\begin{split}
				&m_t=\beta m_{t-1}+\left(1-\beta\right) g_t \\
				&x_{t+1} = x_{t}-\eta\beta m_t-\eta\left(1-\beta\right) g_t
			\end{split}
			\right..
		\end{equation}

	\end{proposition}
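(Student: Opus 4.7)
The plan is to recast the SNAG update into a single-momentum form that matches the NME recursion. Concretely, I would introduce the auxiliary sequence
\begin{equation*}
\tilde{m}_t := \beta m_t + (1-\beta) g_t,
\end{equation*}
so that the second line of SNAG reads simply $x_{t+1} = x_t - \eta \tilde{m}_t$, which already has the same structural form as the NME parameter update $x_{t+1}=x_t-\eta \bar{m}_t$. The task then reduces to showing that $\tilde{m}_t$ obeys the NME recursion, so that under matched initial conditions we can identify $\tilde{m}_t$ with $\bar{m}_t$.

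Next, I would compute $\tilde{m}_t - \beta \tilde{m}_{t-1}$ by substituting the SNAG recursion $m_t = \beta m_{t-1} + (1-\beta) g_t$ into the definition of $\tilde{m}_t$. After expanding, the $\beta^2 m_{t-1}$ terms cancel, leaving only gradient terms, and collecting coefficients I expect to arrive at
\begin{equation*}
\tilde{m}_t - \beta \tilde{m}_{t-1} = (1-\beta)\bigl(g_t + \beta(g_t - g_{t-1})\bigr),
\end{equation*}
which is precisely the NME momentum recursion. Rearranging gives $\tilde{m}_t = \beta \tilde{m}_{t-1} + (1-\beta)(g_t+\beta(g_t-g_{t-1}))$, i.e.\ $\tilde{m}_t$ and $\bar{m}_t$ satisfy the same linear recurrence.

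Finally, I would verify the base case so that the inductive identification $\tilde{m}_t \equiv \bar{m}_t$ holds for all $t \geq 1$. With the standard initializations $m_0 = 0$ and $\bar{m}_0 = 0$ (and adopting the convention $g_0 = 0$, consistent with how NME is stated at the first iterate), one computes $\tilde{m}_1 = (1-\beta)(1+\beta) g_1 = \bar{m}_1$, after which the common recursion propagates the equality. Combined with the already-matched $x$-update, this proves the two schemes generate identical iterate sequences.

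The only real subtlety is the algebraic verification of the recurrence for $\tilde{m}_t$; it is routine but requires careful bookkeeping of the $\beta$-coefficients. The initial-condition matching is a minor point but worth stating explicitly, since NME involves $g_{t-1}$ and thus implicitly assumes a convention at $t=1$. Once both are in place, the equivalence follows immediately.
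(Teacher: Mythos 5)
Your proposal is correct and follows essentially the same route as the paper: define the auxiliary momentum $\tilde m_t = \beta m_t + (1-\beta) g_t$ (the paper's $\bar m_t$), observe that the SNAG $x$-update becomes $x_{t+1}=x_t-\eta\tilde m_t$, and verify that $\tilde m_t - \beta\tilde m_{t-1} = (1-\beta)\bigl(g_t+\beta(g_t-g_{t-1})\bigr)$ using the SNAG recursion. Your explicit check of the initial condition ($m_0=\bar m_0=0$, $g_0=0$, giving $\tilde m_1 = \bar m_1 = (1-\beta)(1+\beta)g_1$) is a welcome addition that the paper leaves implicit, but it does not change the argument.
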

	
	\begin{proof}
		According to SNAG, let $\bar{m}_t=\beta m_t+\left(1-\beta\right) g_t$, the second equality of \eqref{SNAG} becomes
		\begin{equation}
			x_{t+1}=x_t-\eta \bar{m}_t.
		\end{equation}
		According to the definition of $\bar{m}_t$, we have
		\begin{equation}
			\begin{split}
				\bar{m}_t-\beta \bar{m}_{t-1} &\;\,=\; \beta m_t+\left(1-\beta\right) g_t-\beta \left(\beta m_{t-1}+\left(1-\beta\right) g_{t-1}\right) \\
				&\;\,=\; \beta \left(m_t-\beta m_{t-1}\right)+\left(1-\beta\right) g_t-\beta \left(1-\beta\right) g_{t-1} \\
				&\overset{\eqref{SNAG}}{=} \beta \left(1-\beta\right) g_t+\left(1-\beta\right) g_t-\beta \left(1-\beta\right) g_{t-1} \\
				&\;\,=\; (1-\beta)\left(g_t+\beta\left(g_t-g_{t-1}\right)\right).
			\end{split}
		\end{equation}
		Consequently, NME is equivalent to SNAG. 
	\end{proof}
	
	\section{Equivalence relationship of SUM}\label{app:SUM}
	
	\begin{proposition}\label{pro:SUM}
		Liu $et$ $al$. \cite{SUM} unified SHB and SNAG as follows
		\begin{equation}\label{SUM1}
			\text{SUM}_1\text{: }
			\left\{
			\begin{split}
				&m_t=\mu m_{t-1}-\eta_t g_t\\
				&x_{t+1} = x_{t}-\lambda\eta_t g_t+(1-\tilde{\lambda})m_t
			\end{split}
			\right.,
		\end{equation}
		where $\tilde{\lambda}:=(1-\mu)\lambda\in[0,1]$. When $\eta_t=\eta\left(1-\beta\right)$ and $\mu=\beta$, SUM$_{1}$ is equivalent to the following unified momentum method 
		\begin{equation}\label{SUM2}
			\text{SUM}_2\text{: }
			\left\{
			\begin{split}
				&m_t=\beta m_{t-1}+(1-\beta) g_t \\
				&\bar{m}_t=m_t-\tilde{\lambda} (m_t-g_t) \\
				&x_{t+1} = x_{t}-\eta \bar{m}_t
			\end{split}
			\right.,
		\end{equation}
		where $\tilde{\lambda}=\left(1-\beta\right)\lambda\in[0,1]$ and $\beta\in[0,1)$.			
	\end{proposition}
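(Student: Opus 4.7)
The plan is to reconcile the two momentum variables by absorbing the stepsize. In SUM$_1$, the sequence $m_t$ plays the role of a signed, stepsize-scaled update direction (because of the $-\eta_t g_t$ term), whereas in SUM$_2$, $m_t$ is a plain exponential moving average of the gradients. With $\eta_t = \eta(1-\beta)$ constant, the natural change of variables is $m_t^{\mathrm{SUM}_1} = -\eta\, m_t^{\mathrm{SUM}_2}$. Under this identification, together with $\mu = \beta$, the recursion $m_t = \mu m_{t-1} - \eta_t g_t$ in SUM$_1$ becomes $-\eta m_t^{\mathrm{SUM}_2} = -\beta\eta m_{t-1}^{\mathrm{SUM}_2} - \eta(1-\beta) g_t$, which after dividing by $-\eta$ is exactly the moving-average update $m_t^{\mathrm{SUM}_2} = \beta m_{t-1}^{\mathrm{SUM}_2} + (1-\beta) g_t$ of SUM$_2$.

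Next, I will plug this substitution into the parameter update of SUM$_1$. Using $\tilde{\lambda} = (1-\mu)\lambda = (1-\beta)\lambda$, which is consistent across both formulations, and the identity $\lambda\eta_t = \lambda\eta(1-\beta) = \eta\tilde{\lambda}$, the update
\[
x_{t+1} = x_t - \lambda\eta_t g_t + (1-\tilde{\lambda}) m_t^{\mathrm{SUM}_1}
\]
rewrites as
\[
x_{t+1} = x_t - \eta\tilde{\lambda}\, g_t - (1-\tilde{\lambda})\,\eta\, m_t^{\mathrm{SUM}_2} = x_t - \eta\bigl[\tilde{\lambda} g_t + (1-\tilde{\lambda}) m_t^{\mathrm{SUM}_2}\bigr].
\]
The final step is to recognize the bracket as the interpolated direction used in SUM$_2$: expanding $\bar{m}_t = m_t - \tilde{\lambda}(m_t - g_t)$ gives $\bar{m}_t = (1-\tilde{\lambda}) m_t + \tilde{\lambda} g_t$, so the bracket coincides with $\bar{m}_t$ for the SUM$_2$ momentum, producing $x_{t+1} = x_t - \eta\bar{m}_t$.

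The argument is essentially a bookkeeping verification, and the only subtle point, which is where I expect the main care to be needed, is the sign/scale convention: one must consistently treat SUM$_1$'s $m_t$ as $-\eta$ times SUM$_2$'s $m_t$ throughout both the recursion and the parameter update. Once this identification is fixed, the rest is a direct algebraic substitution, and the two $\tilde{\lambda}$'s agree automatically because $\mu = \beta$ makes $(1-\mu)\lambda = (1-\beta)\lambda$. No assumptions beyond the definitions are invoked, so the proof will be purely computational and short.
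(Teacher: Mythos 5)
Your proof is correct, and it takes a genuinely different route from the paper. The paper eliminates the momentum variable from each system, deriving a two-step recursion of the form $x_{t+1} = x_t - (\cdots)\,g_t + \mu\bigl(x_t - x_{t-1} + (\cdots)\,g_{t-1}\bigr)$ in both cases, and then matches coefficients term by term. You instead keep the momentum variables and establish the explicit change of variables $m_t^{\mathrm{SUM}_1} = -\eta\, m_t^{\mathrm{SUM}_2}$, showing that under this identification (and with $\mu = \beta$, $\eta_t = \eta(1-\beta)$) the momentum recursion and the parameter update of SUM$_1$ each transform verbatim into those of SUM$_2$. Your version is a bit cleaner: it yields a state-level bijection between the two algorithms rather than just agreement of the $x_t$-trajectory, and it avoids the somewhat error-prone business of comparing coefficients in a second-order difference equation. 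The paper's approach buys a little robustness to parameter drift (it tracks $\eta_t$ and $\eta_{t-1}$ separately before specializing), but for the constant-stepsize case stated here the two arguments establish the same conclusion. One minor point you could make explicit is the base case: both schemes initialize $m_0 = 0$, so the identification $m_0^{\mathrm{SUM}_1} = -\eta\, m_0^{\mathrm{SUM}_2}$ holds at $t=0$, which anchors the induction.
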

	
	\begin{proof}
		First, SUM$_{1}$ can be written as
		\begin{equation}\label{eq:SUM11}
			\begin{split}
				x_{t+1} &\overset{\eqref{SUM1}}{=} x_{t}-\lambda\eta_t g_t+(1-\tilde{\lambda})m_t \\
				&\overset{\eqref{SUM1}}{=}
				x_{t}-\lambda\eta_t g_t+(1-\tilde{\lambda})\left(\mu m_{t-1}-\eta_t g_t\right) \\
				& \,\,\,=\,\,
				x_{t}-\lambda\eta_t g_t-(1-\tilde{\lambda})\eta_t g_t+\mu(1-\tilde{\lambda})m_{t-1}\\
				&\overset{\eqref{SUM1}}{=} x_{t}-\lambda\eta_t g_t-(1-\tilde{\lambda})\eta_t g_t+\mu\left(x_{t}-x_{t-1}+\lambda\eta_{t-1} g_{t-1}\right) \\
				& \,\,\,=\,\, x_{t}-\lambda\eta_t g_t-(1-(1-\mu)\lambda)\eta_t g_t+\mu\left(x_{t}-x_{t-1}+\lambda\eta_{t-1} g_{t-1}\right).
			\end{split}
		\end{equation}
		In a similar manner, SUM$_{2}$ can be written as
		\begin{equation}\label{eq:SUM21}
			\begin{split}
				x_{t+1} &\overset{\eqref{SUM2}}{=} x_{t}-\eta \bar{m}_t \\
				&\overset{\eqref{SUM2}}{=}x_{t}-\eta(m_t-\tilde{\lambda} (m_t-g_t)) \\
				& \,\,\,=\,\,\,x_{t}-\eta\tilde{\lambda}g_t-\eta(1-\tilde{\lambda})m_t .
			\end{split}
		\end{equation}	
		Upon substituting the first equality of \eqref{SUM2} into the last term of \eqref{eq:SUM21}, we obtain
		\begin{equation}\label{eq:SUM23}
			\begin{split}
				x_{t+1} 
				&\; =\,\,
				x_{t}-\eta\tilde{\lambda} g_t+\eta(1-\tilde{\lambda}) (\beta m_{t-1}+\left(1-\beta\right) g_t)\\
				&\; =\,\,
				x_{t}-\eta\tilde{\lambda} g_t-\eta(1-\tilde{\lambda})(1-\beta) g_t-\eta(1-\tilde{\lambda})\beta m_{t-1}\\
				&\overset{\eqref{eq:SUM21}}{=} x_{t}-\eta\tilde{\lambda}g_t-\eta(1-\beta)(1-\tilde{\lambda})g_t+\beta\left(x_{t}-x_{t-1}+\eta\tilde{\lambda} g_{t-1}\right)\\
				&\; =\,\, x_{t}-\eta\left(1-\beta\right)\lambda g_t-\eta(1-\beta)(1-\left(1-\beta\right)\lambda)g_t\\&\qquad+\beta\left(x_{t}-x_{t-1}+\eta\left(1-\beta\right)\lambda g_{t-1}\right).
			\end{split}
		\end{equation}			
		By comparing the coefficients in \eqref{eq:SUM11} and \eqref{eq:SUM23}, it is straightforward to observe that SUM$_{2}$ and SUM$_{1}$ are equivalent, when $\eta_t=\eta\left(1-\beta\right)$ and $\mu=\beta$.
	\end{proof}

\end{appendices}

\bibliographystyle{apalike}
\bibliography{References}

\end{document}